\documentclass[10pt,twocolumn,twoside]{IEEEtran}
\usepackage{cite}
\usepackage[utf8]{inputenc} 
\usepackage[T1]{fontenc}    
\usepackage{hyperref}       
\usepackage{url}            
\usepackage{booktabs}       
\usepackage{amsfonts}       
\usepackage{nicefrac}       
\usepackage{microtype}      
\usepackage{doi}
\usepackage{amsmath,amssymb,amsfonts}
\usepackage{algorithmic}
\usepackage{graphicx}
\usepackage{textcomp}
\usepackage{xcolor}
\usepackage{bbm}
\usepackage{amsthm}
\usepackage{setspace}
\usepackage{physics}
\usepackage{algorithm}
\usepackage{mathtools}
\usepackage{diagbox}
\usepackage{caption}
\usepackage{subcaption}
\usepackage{physics}
\usepackage{multirow}
\usepackage{epstopdf}
\usepackage{tabularx}
\makeatletter
\newcommand{\distas}[1]{\mathbin{\overset{#1}{\kern\z@\sim}}}%
\newsavebox{\mybox}\newsavebox{\mysim}
\newcommand{\distras}[1]{%
  \savebox{\mybox}{\hbox{\kern3pt$\scriptstyle#1$\kern3pt}}%
  \savebox{\mysim}{\hbox{$\sim$}}%
  \mathbin{\overset{#1}{\kern\z@\resizebox{\wd\mybox}{\ht\mysim}{$\sim$}}}%
}
\makeatother

\makeatletter
\newcommand*\rel@kern[1]{\kern#1\dimexpr\macc@kerna}
\newcommand*\widebar[1]{%
  \begingroup
  \def\mathaccent##1##2{%
    \rel@kern{0.8}%
    \overline{\rel@kern{-0.8}\macc@nucleus\rel@kern{0.2}}%
    \rel@kern{-0.2}%
  }%
  \macc@depth\@ne
  \let\math@bgroup\@empty \let\math@egroup\macc@set@skewchar
  \mathsurround\z@ \frozen@everymath{\mathgroup\macc@group\relax}%
  \macc@set@skewchar\relax
  \let\mathaccentV\macc@nested@a
  \macc@nested@a\relax111{#1}%
  \endgroup
}
\makeatother
\newtheorem{definition}{Definition}
\newtheorem{assumption}{Assumption}
\newtheorem{theorem}{Theorem}
\newtheorem{lemma}{Lemma}
\newtheorem{remark}{Remark}
\newtheorem{fact}[definition]{Fact}
\newtheorem{corollary}{Corollary}

\title{Distributed Consensus Algorithm for Decision-Making in Multi-agent Multi-armed Bandit}
\author{Xiaotong Cheng and Setareh Maghsudi\thanks{X. Cheng is with the Department of Computer Science, University of Tübingen, 72074 Tübingen, Germany (email:xiaotong.cheng@uni-tuebingen.de). S. Maghsudi is with the Department of Computer Science, University of Tübingen, 72074 Tübingen, Germany and with the Fraunhofer Heinrich Hertz Institute, 10587 Berlin, Germany.}}
\begin{document}
\maketitle
\begin{abstract}
We study a structured multi-agent multi-armed bandit (MAMAB) problem in a dynamic environment. A graph reflects the information-sharing structure among agents, and the arms' reward distributions are piecewise-stationary with several unknown change points. The agents face the identical piecewise-stationary MAB problem. The goal is to develop a decision-making policy for the agents that minimizes the regret, which is the expected total loss of not playing the optimal arm at each time step. Our proposed solution, Restarted Bayesian Online Change Point Detection in Cooperative Upper Confidence Bound Algorithm (RBO-Coop-UCB), involves an efficient multi-agent UCB algorithm as its core enhanced with a Bayesian change point detector. We also develop a simple restart decision cooperation that improves decision-making. Theoretically, we establish that the expected group regret of RBO-Coop-UCB is upper bounded by $\mathcal{O}(KNM\log T + K\sqrt{MT\log T})$, where $K$ is the number of agents, $M$ is the number of arms, and $T$ is the number of time steps. Numerical experiments on synthetic and real-world datasets demonstrate that our proposed method outperforms the state-of-the-art algorithms.
\end{abstract}
\begin{IEEEkeywords}
Change point detection; Distributed learning; Multi-armed bandit; Multi-agent cooperation
\end{IEEEkeywords}
\section{Introduction}
Multi-armed bandit (MAB) is a fundamental problem in online learning and sequential decision-making. In the classical setting, a single agent adaptively selects one among a finite set of arms (actions) based on past observations and receives a reward accordingly. The agent repeats this process over a finite time horizon to maximize its cumulative reward. The MAB framework has been applied in several areas such as computational advertisement \cite{tang2013automatic}, wireless communications \cite{maghsudi2014joint} and online recommendation \cite{li2016collaborative}.

To date, most research on the MAB problem focuses on single-agent policies, neglecting the social components of the applications of the MAB framework; nevertheless,  the ever-increasing importance of networked systems and large-scale information networks motivates the investigation of the MAB problem with multiple agents \cite{landgren2021distributed}. For example, the users targeted by a recommender system might be a part of a social network \cite{cesa2013gang}. In that case, the network structure is a substantial source of information, which, if taken advantage of, significantly improves the performance of social learners through information-sharing. The core idea is to integrate a graph/network, where each node represents an agent, and the edges identity information-sharing or other relations among agents \cite{yang2020laplacian}.  

The state-of-the-art literature mainly concerns two variants of the MAB problem: (1) Stochastic bandits, where each arm yields a reward from an unknown, time-invariant distribution \cite{lai1985asymptotically, auer2002finite}; and (2) Adversarial bandits, where the reward distribution of each arm may change adversarially at each time step \cite{auer2002nonstochastic}. However, some application scenarios do not fit in these two models. Specifically, in some applications, the arms’ reward distributions vary much less frequently which has difficulties in suiting an adversarial bandit model \cite{cao2019nearly}. 

For example, in recommendation systems where each item represents an arm and users' clicks are rewards, the users' preferences towards items are unlikely to be time-invariant or change significantly at all time steps \cite{zhou2020near}. Other examples include investment options and dynamic pricing with feedback \cite{yu2009piecewise}. Consequently, we investigate an ``intermediate setting'' of a multi-agent system, namely the piecewise-stationary model. In such a model, the reward distribution of each arm is piecewise-constant and shifts at some unknown time steps called the \textit{change points}.
\subsection{Related Work}
The rising importance of networked systems, the development of online information-sharing networks, and emerging novel concepts such as the system of systems, motivate studying the multi-agent multi-armed bandit (MAMAB) problem as a framework to model distributed decision-making. And the MAMAB provides the required “learning-to-coordinate” framework in a distributed fashion \cite{hanawal2021multiplayer}. 

Reference \cite{buccapatnam2013multi} investigates the multi-armed bandit problem with side observations from the network, where an external agent decides for the network's users. The authors propose the $\epsilon$-greedy-LP strategy, which explores the action for each user at a rate that is a function of its network position. Reference \cite{kalathil2014decentralized} studies the distributed MAMAB problem and proposes an online indexing policy based on distributed bipartite matching. The method ensures that the expected regret grows no faster than $\mathcal{O}(\log^2T)$. However, the proposed policies in \cite{buccapatnam2013multi,kalathil2014decentralized} are agnostic of the network structure and do not consider the agents' heterogeneity. In \cite{madhushani2019heterogeneous}, the authors introduce the notion of sociability to model the likelihood probabilities that one agent observes its neighbors' choices and rewards in the network graph. The proposed algorithm has $\mathcal{O}(\log T)$ as regret bound. Reference \cite{madhushani2020dynamic} assumes that each agent observes the instantaneous rewards and choices of all its neighbors only when exploring. Based on such an assumption, it proposes an algorithm with a regret-bound $\mathcal{O}(\log T)$. In \cite{madhushani2019heterogeneous, madhushani2020dynamic,madhushani2021heterogeneous}, the observation between agents are opportunistic and occasional, whereas, in the real-world applications like social networks, the observations are continuous. In \cite{nayyar2016regret}, it is assumed that the communication among agents is only allowed by playing arms in a certain way. Two decentralized policies $E^3$ and $E^3-TS$ are proposed for both single player MAB problem and multi-player MAB problem respectively, where $E^3$ stands for Exponentially spaced Exploration and Exploitation policy and TS stands for Thompson sampling. In \cite{martinez2019decentralized}, the problem setting includes several agents that synchronously play the same MAB game. They develop a gossip-based algorithm that guarantees $\mathcal{O}(\log T)$ expected regret bound. Reference \cite{landgren2021distributed} considers the same problem with and without collisions. The authors propose two algorithms and investigate the influence of communication graph structure on group performance. In \cite{hanawal2021multiplayer}, the stochastic MAMAB problem is studied. Different as previous research, \cite{hanawal2021multiplayer} considers a dynamic scenario, where the players enter and leave at any time. Algorithms based on ``Trekking approach'' are proposed and sublinear regret is guaranteed with high probability in the dynamic scenario. Reference \cite{lalitha2021bayesian} investigates a decentralized MAMAB problem, where the arm's reward distribution is the same for every agent while the information exchange is limited to be at most $\text{poly}(K)$ times, $K$ is the number of arms. A decentralized Thompson Sampling algorithm and a decentralized Bayes-UCB algorithm are proposed to solve the formulated MAMAB problem.

However, most previous research on MAMAB focuses on either the stochastic- or adversarial environment, while it largely neglects the piecewise-stationary setting. References \cite{kocsis2006discounted,garivier2011upper} introduce the piecewise-stationary MAB, where the reward distributions of arms remain stationary for some intervals (piecewise-stationary segments) but change abruptly at some potentially unknown time steps (change points). The piecewise-stationary environment is specifically beneficial for modeling the real environments\cite{wang2019aware}, giving rise to several decision-making methods, especially for the single-agent MAB. The cutting-edge research on piecewise-stationary MAB includes two methods categories: passively adaptive and actively adaptive. The former makes decisions based on the most recent observations while unaware of the underlying distribution changes \cite{garivier2011upper,besbes2014stochastic,wei2018abruptly}. The latter incorporates a change point detector subroutine to monitor the reward distributions and restart the algorithm once a change point is detected \cite{besson2019generalized,auer2019adaptively,zhou2020near}. Below, we discuss some examples of the two categories in single-agent MAB emphasizing adaptive methods due to their higher relevance to our research.

Algorithm \textit{Discounted UCB} (D-UCB) \cite{kocsis2006discounted, yu2009piecewise,garivier2011upper} averages the past rewards with a discount factor, which results in $\mathcal{O}(\sqrt{NT}\log T)$ as regret bound. In \cite{garivier2011upper}, the authors propose the \textit{sliding-window UCB} (SW-UCB) method, which incorporates the past observations only within a fixed-length moving window for decision-making. The authors prove the regret bound $\mathcal{O}(\sqrt{NT}\log T)$. In \cite{liu2018change}, the authors propose a change-detection-based framework that actively detects the change points and restarts the MAB indices. They establish the regret bound $\mathcal{O}(\sqrt{NT \log \frac{T}{N}})$ for their proposed framework. The \textit{Monitored-UCB} (M-UCB) algorithm \cite{cao2019nearly} has a $\mathcal{O}(\sqrt{NMT\log T})$ regret bound, which is slightly higher than CUMSUM-UCB; nonetheless, it is more robust as it requires little parameter specification. Reference \cite{besson2019generalized} combines the bandit algorithm KL-UCB with a parameter-free change point detector, namely, the Bernoulli Generalized Likelihood Ratio Test (GLRT), to obtain a dynamic decision-making policy. The authors develop two variants, global restart, and local restart. They also prove the regret upper-bound $\mathcal{O}(\sqrt{NT\log T})$ for known number of change points $N$. Similarly, \cite{zhou2020near} applies GLRT change point detector in solving a combinatorial semi-bandit problem in a piecewise-stationary environment, where the regret is upper bounded by $\mathcal{O}(\sqrt{NMT\log T})$.
\subsection{Our Contribution}
Our main contributions are as follow:
\begin{itemize}
\item We propose an efficient running consensus algorithm for piecewise-stationary MAMAB, called \textit{RBO-Coop-UCB}. It addresses the multi-agent decision-making problem in changing environments by integrating a change point detector proposed in \cite{alami2020restarted}, namely, Restarted Bayesian online change point detector (RBOCPD) to a cooperative UCB algorithm. 
\item To improve the decision-making performance via information-sharing, we incorporate an efficient cooperation framework in the proposed strategy RBO-Coop-UCB: 1) The agents share observations in neighborhoods to enhance the performance in arm selection. 2) We integrate a majority voting mechanism into the restart decision part. The cooperation framework is generic and easily integrable to solve piecewise-stationary bandit problems, especially in various actively adaptive MAB policies.
\item For any networked multi-agent systems, we establish the group regret bound $\mathcal{O}(KNM\log T + K\sqrt{MT\log T})$. To the best of our knowledge, this is the first regret bound analysis for bandit policies with a RBOCPD.
\item By intensive experiments on both synthetic and real-world datasets, we show that our proposed policy, RBO-Coop-UCB, performs better than the state-of-the-art policies. We also integrate our proposed cooperative mechanism to different bandit policies and demonstrate performance improvement as a result of cooperation. 
\end{itemize}
The paper is structured as follows. We formulate the problem in \textbf{Section~\ref{sec:prob-form}}. In \textbf{Section~\ref{sec:rbo-coop-ucb}}, we first explain the restarted Bayesian online change point detector, which utilized in the proposed algorithm. Then we develop an algorithmic solution to the formulated problem. In \textbf{Section~\ref{sec:per-ana}}, we establish the theoretical guarantees, including the regret bound, for the proposed solution. In \textbf{Section~\ref{sec:expriment}}, we evaluate our proposal via numerical analysis based on both synthetic- and real-world datasets. \textbf{Section~\ref{sec:concl}} concludes the paper.
\section{Problem Formulation}
\label{sec:prob-form}
We consider an $M$-armed bandit with $K$ players (agents, hereafter) gathered in the set $\mathcal{K}$. The agents play the same piecewise-stationary MAB problems simultaneously for $T$ rounds. We use $\mathcal{M}$ to denote the time-invariant action set. At time $t$, the reward associated with arm $m \in \mathcal{M}$ is randomly sampled from distribution $f_t^m$ with mean $\mu_t^m$. The rewards are independent across the agents and over time. The agents form a network modeled by an undirected graph $\mathcal{G}(\mathcal{K}, \mathcal{E})$ where $\mathcal{E} = \{e(k,j)\}_{k, j \in \mathcal{K}}$ is the edge set. In $\mathcal{G}$, nodes and edges represent agents and the potential of communication, respectively. Two agents $k$ and $j$ are neighbors if $e(k,j) \in \mathcal{E}$. In addition to its own observation, each agent can observe its neighbors' selected arms and sampling rewards.

At each time step, each agent $k \in \mathcal{K}$ pulls one arm $m \in \mathcal{M}$ and obtains a reward sampled from $f_t^m$. We use $I_t^k$ to show the action of agent $k$ at time $t$. Besides, $X_t^m$ is the sampling reward of arm $m$ at time $t$. We assume that the reward distributions of arms are piecewise-stationary satifying the following Assumption~\ref{def:ps-Bernoulli}. 
\begin{assumption}[Piecewise-stationary Bernoulli process \cite{alami2020restarted}]
\label{def:ps-Bernoulli} 
The environment is piecewise-stationary, meaning that it remains constant over specific periods and changes from one to another. Let $T$ denote the time horizon and $N$ the overall number of piecewise-stationary segments observed until time $T$.
\begin{gather}
N = 1 + \sum_{t=1}^{T-1} \mathbbm{1}\{f_t^m \neq f_{t+1}^m \ \text{for some} \ m \in \mathcal{M}\}. \label{eq:num-ps}
\end{gather}
The reward distributions of arms are piecewise-stationary Bernoulli processes $\mathcal{B}(\mu_t^m)$ such that there exists an non-decreasing change points sequence $(\nu_n)_{n \in [1,N-1]} \in \mathbb{N}^{N-1}$ verifying
\begin{gather}
\begin{cases} 
\text{$\forall n \in [1,N-1], \quad \forall t \in  [\nu_n,\nu_{n+1}), \ \forall m \in \mathcal{M}$,}&\ \text{$\mu_t^m = \mu_n^m$}\\
\text{$\nu_1 = 1 < \nu_2 < \ldots < \nu_{N}=T$}&\ \text{}
\end{cases} \notag 
\end{gather}
\end{assumption}
The performance of each single agent $k$ is measured by its (dynamic) regret, the cumulative difference between the expected reward obtained by an oracle policy playing an optimal arm $I_t^*$ at time $t$, and the expected reward obtained by action $I_t^k$ selected by agent $k$ 
\begin{gather}
R_T^k = \sum_{t=1}^T [\mathbb{E}(X_t^{I_t^*}) - \mathbb{E}(X_t^{I_t^k})].
\end{gather}
Reference \cite{zinkevich2003online} introduces the concept of dynamic regret. In contrast to a fixed benchmark in the static regret, dynamic regret compares with a sequence of changing comparators and therefore is more suitable for measuring the performance of online algorithms in piecewise-stationary environments \cite{zhao2021bandit}. In the multi-agent setting considered here, we study the network performance in terms of the regret experienced by the entire network; As such, the define the decision-making objective as to minimize the expected cumulative \textit{group regret},
\begin{gather}
R_T=K\sum_{t=1}^T\mathbb{E}(X_t^{I_t^*})-\sum_{k=1}^K \sum_{t=1}^T \mathbb{E}(X_t^{I_t^k}). 
\label{eq:net-rgt}
\end{gather}
%
\section{The RBO-Coop-UCB Algorithm} 
\label{sec:rbo-coop-ucb}
\textbf{Notations.} We use boldface uppercase letters to represent vectors and use calligraphic letters to represent sets. For example, $\boldsymbol{X}_{s:t}$ denotes the sequence of observations from time step $s$ to $t$ and $\mathcal{X}^{k,m}$ refers to the set containing the sampled rewards of arm $m$ by agent $k$. The inner product is denoted by $\langle \cdot, \cdot \rangle$. $\mathcal{O}(\cdot)$ refers to the big-O notation while $o(\cdot)$ refers to the small-o notation. $\mathbbm{1}\{\cdot\}$ denotes the indicator function. \textbf{Table~\ref{tab:nota}} gathers the most important notations.

The proposed algorithm, RBO-Coop-UCB, combines a network UCB algorithm (similar to \cite{madhushani2021heterogeneous,landgren2021distributed}) with a change detector running on each arm (based on Bayesian change point detection strategy \cite{alami2020restarted}). Roughly-speaking, RBO-Coop-UCB involves three ideas:\\ 
(1) A unique cooperative UCB-based method that guides the network system towards the optimal arm in a piecewise-stationary environment;\\
(2) A change point detector introduced in \textbf{Algorithm~\ref{alg:rbocpd}};\\ 
(3) A novel cooperation mechanism for change point detection to filter out the false alarms.\\
We summarize the proposed policy in \textbf{Algorithm~\ref{alg:r-coop-ucb}}.

To the best of our knowledge, this is the first attempt to solve the MAMAB problem in a piecewise-stationary environment. Compare to the previous work \cite{landgren2021distributed,madhushani2019heterogeneous,madhushani2020dynamic,madhushani2021heterogeneous}, our proposed algorithm is applicable to different multi-agent systems and non-stationary environment. In the following, we first introduce the restarted Bayesian online change point detection procedure (RBOCPD), which we utilize to develop our decision-making policy, RBO-Coop-UCB, which we then explain in detail.
\begin{table}[!ht]
\begin{center}
\centering
\captionsetup{justification=centering}
\caption{Notation}
\label{tab:nota}
 \begin{tabular}{|c|p{6.7cm}|}
 \hline
 Notation & Meaning  \\ [0.5ex] 
 \hline
 $\boldsymbol{X}_{s:t}$ & $\boldsymbol{X}_{s:t} = (X_s, \ldots, X_t)$: Sequence of observations from time $s$ up to time $t \geq s$ \\
 \hline
 $\hat{\mu}_{s:t}$ & $\hat{\mu}_{s:t} = \frac{\sum_{i=s}^t X_i}{n_{s:t}}$; Empirical mean over $\boldsymbol{X}_{s:t}$\\
 \hline
 $n_{s:t}$ & $n_{s:t} = t-s+1$; Length of $\boldsymbol{X}_{s:t}$ \\ \hline 
 $r_t$ & Current run-length \\ \hline 
 $l_{s,t}$ & Forecaster loss \\
 \hline
 $\vartheta_{r,s,t}$ & Forecaster weight \\
 \hline
 $W$ & $W \in \mathbb{R}^{K\times K}$ Observation matrix, $W_{k,j} = 1$ if $k$ and $j$ are neighbors  \\
 \hline
 $K$ & Number of agents  \\
 \hline
 $M$ & Number of arms  \\
 \hline
 $N$ & Number of change points \\
 \hline
 $\nu_n$ & The real $n$-th change point \\
 \hline
 $\tau_n$ & The $n$-th detected change point \\
 \hline
 $I_t^k$ & Arm selected by agent $k$ at time $t$ \\
 \hline
 $N_T^{k,m}$ & Total number of times that agent $k$ observes from arm $m$ \\
 \hline
 $\hat{\mu}_T^{k,m}$ & Estimated reward of arm $m$ by from agent $k$ \\
 \hline
 $f_m^t$ & Reward distribution of arm $m$ at $t$ \\ 
 \hline
 $\mu_m^t$ & Expected reward of arm $m$ at $t$ \\
 \hline
 $X_t^m$ & Sampling reward of arm $m$ at time $t$ \\
 \hline
 $S_T^{k,m}$ & Total reward that agent $k$ observes from arm $m$ \\ \hline
 $\eta_k$ & Degrees of agent $k$ in $\mathcal{G}$ \\ \hline
 \end{tabular}
 \end{center}
\end{table}
\subsection{Restarted Bayesian Online Change Point Detector}
Detecting changes in the underlying distribution of a sequence of observations \cite{alami2020restarted} is a classic problem in statistics. The Bayesian online change point detection appeared first in \cite{adams2007bayesian,fearnhead2007line}. Since then, several authors have used it in different contexts  \cite{mellor2013thompson,alami2017memory,maghsudi2020non,8110693}. Nevertheless, the previous research seldom considers the theoretical analysis of its performance bounds, such as false alarm rate and detection delay. In \cite{alami2020restarted}, the authors develop a modification of Bayesian online change point detection and prove the non-asymptotic guarantees related to the false alarm rate and detection delay. Remark~\ref{rem:comp-glr-rbo} in Appendix~\ref{app:rbocpd} compares RBOCPD and other change point detectors in detail.

Let $r_t$ denote the number of time steps since the last change point, given the observed data $\boldsymbol{X}_{1:t}$, generated from the piecewise-stationary Bernoulli process described in \textbf{Definition~\ref{def:ps-Bernoulli}}. The Bayesian strategy computes the posterior distribution over the current run-length $r_t$, i.e.,  $p(r_t|\boldsymbol{X}_{1:t})$ \cite{adams2007bayesian}. The following message-passing algorithm recursively infers the run-length distribution \cite{alami2020restarted}
\begin{align*}
p(r_t|\boldsymbol{X}_{1:t}) & \propto \\
& \, \sum_{r_{t-1}} \underbrace{p(r_t|r_{t-1})}_\text{hazard} \underbrace{p(X_t|r_{t-1},\boldsymbol{X}_{1:t-1})}_\text{UPM} p(r_{t-1}|\boldsymbol{X}_{1:t-1}).
\end{align*}
A simple example of the hazard function $h$ is a constant $h = \frac{1}{\lambda} \in (0,1)$ \cite{alami2020restarted}.

The RBOCPD algorithm assumes that each possible value of the run-length $r_t$ corresponds to a specific forecaster. The loss $l_{s,t}$ of forecaster $s$ at time $t$ related to underlying predictive distribution (UPM) $p(X_t|r_{t-1},\boldsymbol{X}_{s:t-1})$ then follows as
\begin{align}
l_{s:t} &= - \log Lp(X_t|\boldsymbol{X}_{s:t-1}),  \\
 &=  - X_t \log Lp(1|\boldsymbol{X}_{s:t-1}) - (1-X_t)\log Lp(0|\boldsymbol{X}_{s:t-1}), \notag
\label{eq:fore-loss}
\end{align}
where $Lp(\cdot)$ is the Laplace predictor \cite{alami2020restarted}, defined below.
\begin{definition}[Laplace predictor] 
\label{def:lapp}
The Laplace predictor $Lp(X_{t+1}|\boldsymbol{X}_{s:t})$ takes as input a sequence $\boldsymbol{X}_{s:t} \in \{0,1\}^{n_{s:t}}$ and predicts the value of the next observation $X_{t+1} \in \{0,1\}$ as
\begin{gather}
Lp(X_{t+1}|\boldsymbol{X}_{s:t}) = \begin{cases} 
\text{$\frac{\sum_{i=s}^t X_i + 1}{n_{s:t}+2}$,}&\quad\text{if $X_{t+1} = 1$,}\\
\text{$\frac{\sum_{i=s}^t(1-X_i)+1}{n_{s:t}+2}$,}&\quad\text{if $X_{t+1} = 0$,}
\end{cases} 
\end{gather}
where $\forall X\in \{0,1\}, Lp(X|\phi) = \frac{1}{2}$ corresponds to the uniform prior given to the process generating $\mu_c$.
\end{definition}
The weight $\vartheta_{r,s,t}$ of forecaster $s$ at time $t$ for starting time $r$ is the posterior $\vartheta_{r,s,t} = p(r_t = t-s|\boldsymbol{X}_{s:t})$, where
\begin{gather}
\vartheta_{r,s,t} = \begin{cases} 
\text{$\frac{\eta_{r,s,t}}{\eta_{r,s,t-1}}\exp(-l_{s,t})\vartheta_{r,s,t-1}$,}&\quad\text{$\forall s<t$}\\
\text{$\eta_{r,t,t}\times \mathcal{V}_{r,t-1}$,}&\quad\text{$s=t$} 
\end{cases} 
\label{eq:theta}
\end{gather}
by using the hyperparameter $\eta_{r,s,t}$ (instead of the constant hazard function value $\frac{1}{\lambda}$) and the initial weight $\mathcal{V}_{r,t-1}$. The initial weight $\mathcal{V}_{r,t-1}$ is defined as
\begin{gather}
\mathcal{V}_{r:t-1} = \exp(-\hat{L}_{r:t-1}),
\end{gather}
for some starting time $r$, where $\hat{L}_{r:t-1} = \sum_{r' = r}^{t-1} l_{r':t-1}$ is the cumulative loss incurred by the forecaster $r$ from time $r$ until time $t-1$. Based on \eqref{eq:fore-loss}, the cumulative loss yields
\begin{gather}
\hat{L}_{r:t-1} = \sum_{r' = r}^{t-1} -\log Lp(x_{t-1}|\boldsymbol{x}_{r':t-2}).
\end{gather}
Besides, RBOCPD includes a restart procedure to detect changes based on the forecaster weight. For any starting time $r \leq t$,
\begin{gather}
\textbf{Restart}_{r:t} = \mathbbm{1}\{\exists s \in (r,t]: \vartheta_{r,s,t} > \vartheta_{r,r,t}\}. 
\label{eq:res}
\end{gather}
The intuition behind the criterion $\textbf{Restart}_{r:t}$ is the following: At each time $t < \nu$ with no change, the forecaster distribution concentrates around the forecaster launched at the starting time $r$. Thus, if the distribution $\vartheta_{r,s,t}$ undergoes a change, a change becomes observable. 

The restarted version of Bayesian Online Change Point Detector \cite{alami2020restarted} can be formulated as follows.
\begin{algorithm}[!htp] 
\label{alg:rbocpd}
\caption{R-BOCPD: RBO($X_{1:t}, \eta_{1,s,t}$)} 
\label{alg:rbocpd}
\textbf{Require:} Observations $X_{1:t}$ and hyperparameter $\eta_{1,s,t}$; 
\begin{algorithmic}[1]
\STATE $r \leftarrow 1, \vartheta_{r,1,1} \leftarrow 1, \eta_{r,1,1} \leftarrow 1$.
\FOR{$i=1,2,\dots, t$}
\STATE Calculate $\vartheta_{r,s,i}$ of each forecaster $s$ according to \eqref{eq:theta}.
\STATE Calculate $\textbf{Restart}_{r:i}$ according to \eqref{eq:res}.
\IF{$\textbf{Restart}_{r:i} = 1$}
\RETURN True
\ENDIF
\ENDFOR
\RETURN False
\end{algorithmic}
\end{algorithm}
\subsection{RBO-Coop-UCB Decision-Making Policy}
The proposed algorithm is a Network UCB algorithm that allows for some restarts.  Each agent runs the RBO-Coop-UCB in parallel to choose an arm to play. Also, each agent observes its neighbors' choices and the corresponding rewards. To guarantee enough samples for change point detection, each arm will be selected several times in the exploration steps. Let $I_t^k$ and $X_t^{k,m}$ be some variables to denote the selected arm and received reward of agent $k$ at time $t$, respectively, where $X_t^{k,m}$ is the i.i.d copy of $X_t^m$. The total number of times that agent $k$ observes option $m$'s rewards yields
\begin{gather}
N_T^{k,m} = \sum_{t=1}^T\sum_{j=1}^K \mathbbm{1}\{I_t^j = m\}\mathbbm{1}\{e(k,j) \in \mathcal{E}\}.
\end{gather}
The empirical rewards of arm $m$ by agent $k$ at time $T$ is
\begin{gather}
\hat{\mu}_T^{k,m} = \frac{S_T^{k,m}}{N_T^{k,m}},
\end{gather}
where $S_T^{k,m} = \sum_{t=1}^T\sum_{j=1}^K X_t^{k,m}\mathbbm{1}\{I_t^k = m\}\mathbbm{1}\{e(k,j) \in \mathcal{E}\}$ is the total reward observed by agent $k$ from option $m$ in $T$ trials. At every sampling time step, if the agent $k$ is in a forced exploration phase, it selects the arm using \eqref{eq:f-expl} to ensure a sufficient number of observations for each arm. Otherwise, it chooses the arm according to the sampling rule described in \textbf{Definition~\ref{def:sample-rule}}. 
\begin{definition}
\label{def:sample-rule}
The sampling rule $\{I_t^k\}_1^T$ for agent $k$ at time $t \in \{1,2,\ldots,T\}$ is
\begin{gather}
\mathbbm{1}\{I_t^k = m\} = \begin{cases} 
\text{$1$,}&\quad\text{if $Q_t^{k,m} = \max \{Q_t^{k,1}, \ldots, Q_t^{k,M}\}$}\\
\text{0,}&\quad\text{otherwise}
\end{cases} 
\end{gather}
with
\begin{align}
Q_t^{k,m} &= \hat{\mu}_t^{k,m} + C_t^{k,m}, \label{eq:q}\\
C_t^{k,m} &= \sqrt{\frac{\xi(\alpha^k+1)\log (t-\tau^k)}{N_t^{k,m}}},
\end{align}
where $\tau^k$ is the latest change points detected by agent $k$. $\xi \in (0,1]$ is a constant and $\alpha^k = \frac{\eta_k - \eta_k^{avg}}{\eta_k}$ is an agent-based parameter where $\eta_k$ is the number of neighbors of agent $k$. $\eta_k = \langle \boldsymbol{1}, W_k \rangle - 1$ with $\boldsymbol{1}$ a $K$ dimensional vector with all elements equal to $1$, $W_k$ is the $k$-th column of matrix $W$. $\eta_k^{avg} = \frac{1}{\eta_k}\sum_{e(k,j)\in \mathcal{E}}^K \eta_j$ is the average degree of neighbors of agent $k$. We assume that $\forall k \in \mathcal{K}$, $\eta_k^{avg} \leq 2\eta_k$, which indicates $\forall k \in \mathcal{K}$, $\alpha_k \in (-1,1)$.
\end{definition}
\begin{remark}
Different values of $\alpha^k$ imply heterogeneous agents' exploration. On the one hand, agents with more neighbors have more observations that reduce the uncertainties of their reward estimations and increase their exploitation potential. Less exploration then lowers the usefulness of the information they broadcast, thus decreasing its neighbors' exploitation potential \cite{madhushani2021heterogeneous}. Therefore, to improve the group performance, we propose the heterogeneous explore-exploit strategies with sampling rule in \textbf{Definition~\ref{def:sample-rule}} that regulate exploitation potential across the network.  
\end{remark}
After agent $k$ receives the sampling reward $X_t^{k,m}$ and observes its neighbors' sampling rewards, it combines the observed rewards into the observation collection $\mathcal{X}^{k,m}$ to run the RBOCPD (\textbf{Algorithm~\ref{alg:rbocpd}}). In general, the set $\mathcal{X}^{k,m} = \boldsymbol{X}_{\tau^k:N_t^{k,m}}$ contains all the sampling rewards of arm $m$ observed by agent $k$ since the last change point $\tau^k$. Each agent $k$ receives a binary restart signal $r_t^{k,m}$ afterward, where $r_t^{k,m} = 1$ if there is a change point and zero otherwise. The agent calculates the restart signal $r_t^{k,m}$ using \eqref{eq:res}. The agents make final restart decisions using the cooperation mechanism described in \textbf{Definition~\ref{def:coop-res}} based on its restart signal and observations.
\begin{definition}[Cooperative restart mechanism]
\label{def:coop-res}
Each agent makes the final restart decision using the majority voting outcome among neighbors: If more than half of its neighbors detect a change in one of the played arms, then the agent restarts its UCB indices as
\begin{gather}
\sum_{j \in \mathcal{N}_k} \mathbbm{1}\{r_t^{j,m} > 0\} \geq \lceil \frac{\eta_k}{2} \rceil \rightarrow \textbf{Restart}_t^k = \textbf{True}.
\end{gather}
However, different agents have distinct observations, so a simple majority voting mechanism at each time step might lead to miss-detection due to the agent's asynchronous detection. Therefore, we propose an efficient cooperation mechanism for restart decision, where a restart memory time window records the previous restart of agents' neighbors in a short period $d$. The cooperation restart detection considers the majority voting of restart among neighbors in that period, i.e.,
\begin{align}
    \sum_{j \in \mathcal{N}_k} \mathbbm{1}\{\exists i \in [N_{t-d}^{j,m}, N_t^{j,m}], r_i^{j,m} &> 0\} \geq \lceil \frac{\eta_k}{2} \rceil \notag \\
    &\rightarrow \textbf{Restart}_t^k = \textbf{True}.
\end{align}
Hence, the slower detector receives the restart information from the faster ones to prevent missing change points. 
\end{definition}
\begin{remark}
The length of restart memory time window $d$ depends on the detection delay $d = \mathcal{D}_{\Delta, \nu_{n-1}+d_{n-1}^{k,m},\nu_n}$ of RBOCPD, as shown in \textbf{Theorem~\ref{the:dd}}. Although each agent maintains a change point detector with its observations, which could be faster or slower, all detectors follow the same principle. We bound the detection delay in \textbf{Theorem~\ref{the:dd}}; Therefore, for every change point, the maximum detection time difference is bounded. As a result, selecting $d$ based on the delay to include all possible correct detections improves the regret bound.
\end{remark}
\begin{algorithm}[!htp]
\caption{RBO-Coop-UCB} 
\label{alg:r-coop-ucb}
\begin{algorithmic}[1]
\STATE \textbf{Initialization} $\forall m \in \mathcal{M}$, $\forall k \in \mathcal{K}$, $\mathcal{X}^{k,m} \leftarrow \phi $; $N_0^{k,m} \leftarrow 0$; $S_0^{k,m} \leftarrow 0$; $\tau^k \leftarrow 0 $.
\FOR{$t = 1,2, \cdots, T$}
\FOR{$K \in \mathcal{K}$ in parallel }
\IF{$(t - \tau^k) \mod \lfloor \frac{M}{p} \rfloor \in \mathcal{M}$}
\STATE Select arm $I_t^k$ \hfill (forced exploration)
\begin{gather}
I_t^k \leftarrow (t - \tau^k) \mod \lfloor \frac{M}{p} \rfloor   \label{eq:f-expl}
\end{gather}
\ELSE 
\STATE Select arm $I_t^k$ as \hfill (UCB)
\begin{gather}
I_t^k \leftarrow \text{argmax}_m (\hat{\mu}_t^{k,m} + C_t^{k,m}),
\end{gather}
where $\hat{\mu}_t^{k,m} = \frac{S_t^{k,m}}{N_t^{k,m}}$ and $C_t^{k,m} = \sqrt{\frac{\xi(\alpha^k+1)\log (t-\tau^k)}{N_t^{k,m}}}$. 
\ENDIF
\STATE Play arm $I_t^k$ and receive the reward $X_t^{k,I_t^k}$.
\STATE Observe neighbors' option and rewards and update 
\begin{gather*}
N_t^{k,m} \leftarrow N_{t-1}^{k,m}+\sum_{j=1}^K \mathbbm{1}\{I_t^j = m\}\mathbbm{1}\{e(k,j)\in\mathcal{E}\}, \\
S_t^{k,m} \leftarrow \sum_{\tau=1}^t\sum_{j=1}^K X_t^{k,m}\mathbbm{1}\{I_t^j = m\}\mathbbm{1}\{e(k,j)\in\mathcal{E}\}, \\
\mathcal{X}^{k,m} \leftarrow  \mathcal{X}^{k,m} \cup \{X_t^{j,m}\}, \text{if} \ j \in \mathcal{N}_k, \mathbbm{1}\{I_t^j = m\} = 1
\end{gather*}
\STATE $r_t^{k,m} = \text{RBO}_k(\mathcal{X}^{k,m},\eta_{\tau^k,s,N_t^{k,m}})$ (\textbf{Algorithm~\ref{alg:rbocpd}}).
\IF{$\sum_{j \in \mathcal{N}_k} \mathbbm{1}\{\exists i \in [N_{t-d}^{j,m}, N_t^{j,m}], r_i^{j,m} > 0\} \geq \lceil \frac{\eta_k}{2} \rceil$}
\STATE $\tau^k \leftarrow t$, $\forall m \in \mathcal{M}$, $\mathcal{X}^{k,m} \leftarrow \phi $; $N_t^{k,m} \leftarrow 0$; $S_t^{k,m}\leftarrow 0$ \hfill (restart agent $k$'s UCB)
\ENDIF
\ENDFOR
\ENDFOR
\end{algorithmic}
\end{algorithm}
\section{Performance Analysis} 
\label{sec:per-ana}
In this section, we analyse the $T$-step regret of our proposed algorithm RBO-Coop-UCB.
\begin{assumption}
\label{assump:d}
Define $d_n^{k,m} = \lceil \frac{M}{p}\mathcal{D}_{\Delta,(\nu_{n-1}+d_{n-1}^{k,m}),\nu_n}+ \frac{M}{p}\rceil$, where $\mathcal{D}_{\Delta,r,\nu_n} = \min \{d \in \mathbb{N}^*: d > \frac{n_{r:\nu_n-1}(f_{r,\nu_n,d+\nu_n-1} - \log \eta_{r,\nu_n,d+\nu_n-1}) }{2n_{r:\nu_n-1}(\Delta-\mathcal{C}_{r,\nu_n,d+\nu_n-1,\delta})^2+ \log \eta_{r,\nu_n,d+\nu_n-1} - f_{r,\nu_n,d+\nu_n-1}}\}$ and $\mathcal{C}_{r,\nu_n,d+\nu_n-1,\delta} = \frac{\sqrt{2}}{2} (\sqrt{\frac{n_{r:\nu_n-1}+1}{n_{r:\nu_n-1}^2}\log(\frac{2\sqrt{n_{r:\nu_n}}}{\delta})} + \sqrt{\frac{n_{\nu_n:\nu_n+d-1}+1}{n_{\nu_n:\nu_n+d-1}^2}\log(\frac{2n_{r:d+\nu_n-1}\sqrt{n_{\nu_n:\nu_n+d-1}}\log^2(n_{r:d+\nu_n-1})}{\log 2 \delta})})$. Then we assume that for all $n \in \{1,\ldots,N\}$, $k \in \mathcal{K}$, $m \in \mathcal{M}$, $\nu_n - \nu_{n-1} \geq 2\max (d_{n}^{k,m},d_{n-1}^{k,m})$.
\end{assumption}
\begin{remark}
Assumption~\ref{assump:d} is a standard assumption in non-stationary multi-armed bandit literature \cite{cao2019nearly,besson2019generalized,zhou2020near}. It guarantees that the length between two change points is sufficient to detect the distribution change with high probability. Besides, the detection delay $\mathcal{D}_{\Delta,r,\nu_n}$ is asymptotically order optimal. The asymptotic regime is reached when $\frac{n_{r:\nu_n-1}}{\log (1/\delta)} \rightarrow \infty$, and $\log n_{r:\nu_n-1} = o(\log \frac{1}{\delta})$, when $\delta \rightarrow 0$, we obtain that \cite{alami2020restarted}
\begin{gather}
\mathcal{D}_{\Delta,r,\nu_n} \xrightarrow[\text{$\nu_n \rightarrow \infty$}]{} \frac{-\log \eta_{r,\nu_n, d+\nu_n-1}+ o(\log \frac{1}{\delta})}{2 |\mu_n - \mu_{n-1}|^2}.  \label{eq:asy-dd}
\end{gather}
If we choose the hyperparameter $\eta_{r,s,t} \approx \frac{1}{n_{r:t}}$ in RBOCPD and plug it into the asymptotic expression of detection delay \eqref{eq:asy-dd}, we get
\begin{gather}
    \mathcal{D}_{|\mu_n - \mu_{n-1}|,r,\nu_n} \xrightarrow[\text{$\nu_n \rightarrow \infty$}]{} \frac{o(\log \frac{1}{\delta})}{2 |\mu_n - \mu_{n-1}|^2}.
\end{gather}
\end{remark}
%
\begin{lemma}[Regret bound of stationary RBO-Coop-UCB] 
\label{lem:rgt-sta}
Consider a stationary environment, i.e., when $N = 1$, $\nu_0 = 0$, and $\nu_1 = T$. Then the upper bound of the expected cumulative regret of RBOCPD-Coop-UCB for each agent follows as
\begin{gather}
    \mathcal{R}_T^k \leq \Delta_1^*[M\sigma T + pT + M\lceil \frac{8 \xi \log T}{(\Delta_1^{\min})^2} \rceil + M(1+\frac{\pi^2}{3})], \notag 
\end{gather}
where $\Delta_1^*$ is the maximum gap between the mean rewards of arms, $\Delta_1^{\min}$ is the minimum gap between the mean rewards of arms, and $\sigma < \delta$ is the false alarm rate under cooperation.
\end{lemma}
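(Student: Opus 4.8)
The plan is to bound the per-agent regret by the expected number of suboptimal pulls, each costing at most the maximal gap $\Delta_1^*$, and then to partition those suboptimal pulls according to the mechanism that produced them. Since in the stationary regime the oracle arm is fixed, I would write $\mathcal{R}_T^k \le \Delta_1^* \sum_{m \ne m^*} \mathbb{E}[N_T^{k,m}]$, where $m^*$ denotes the optimal arm, and split the pulls of each suboptimal arm $m$ into three disjoint groups: (i) pulls triggered by the forced-exploration rule \eqref{eq:f-expl}; (ii) pulls occurring on rounds corrupted by a spurious restart of the cooperative change-point detector; and (iii) genuine UCB exploitation pulls in which arm $m$ maximizes the index $Q_t^{k,m}$ while no false alarm has occurred. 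Bounding each group separately is designed to yield the three bracketed contributions $pT$, $M\sigma T$, and $M\lceil 8\xi\log T/(\Delta_1^{\min})^2\rceil + M(1+\pi^2/3)$.

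For the forced-exploration term, I would observe that the rule \eqref{eq:f-expl} cycles deterministically through all $M$ arms once in every block of length $\lfloor M/p\rfloor$. Over the horizon there are at most $T/\lfloor M/p\rfloor \approx pT/M$ such blocks, each contributing one pull per arm, so the number of forced suboptimal pulls summed over arms is at most $pT$; each costs at most $\Delta_1^*$, giving the $\Delta_1^*\,pT$ contribution.

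For the exploitation term I would run the classical UCB counting argument on the index $Q_t^{k,m}=\hat{\mu}_t^{k,m}+C_t^{k,m}$ with $C_t^{k,m}=\sqrt{\xi(\alpha^k+1)\log(t-\tau^k)/N_t^{k,m}}$ and $\tau^k=0$ since no true change occurs. A suboptimal arm $m$ can be chosen by the UCB rule only if one of the concentration events $\{\hat{\mu}_t^{k,m^*}+C_t^{k,m^*}\le \mu^{m^*}\}$ or $\{\hat{\mu}_t^{k,m}-C_t^{k,m}\ge \mu^{m}\}$ fails, or if $N_t^{k,m}$ is still below the threshold at which the index gap closes. A Hoeffding/union bound over the two concentration events contributes the $1+\pi^2/3$ tail per arm, while the sampling threshold contributes the $\log T$ term; using $\alpha^k\in(-1,1)$ so that $\alpha^k+1\le 2$ and $\Delta_m\ge\Delta_1^{\min}$, then summing over the $M$ arms, yields $M\lceil 8\xi\log T/(\Delta_1^{\min})^2\rceil + M(1+\pi^2/3)$ suboptimal exploitation pulls.

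The hardest step is the false-alarm term $M\sigma T$, where the cooperative restart mechanism of \textbf{Definition~\ref{def:coop-res}} enters. In the stationary environment every restart is spurious, and a restart both resets the counters $N_t^{k,m}$ and the empirical means and re-triggers forced exploration, so I cannot directly invoke the static concentration bounds used in group (iii). The plan is to condition on the good event that the cooperative detector raises no false alarm: by the false-alarm guarantee of RBOCPD from \cite{alami2020restarted}, sharpened by the majority-voting rule requiring $\lceil \eta_k/2\rceil$ neighbors to agree within the memory window $d$, the per-round per-arm false-alarm probability is at most $\sigma<\delta$. A union bound over the $M$ arms and $T$ rounds then limits the expected number of false-alarm-corrupted rounds to $M\sigma T$, each charged at most $\Delta_1^*$, while on the complementary event the analysis of (i)--(iii) applies verbatim. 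The delicate point is verifying that the majority-voting aggregation genuinely drives the false-alarm rate $\sigma$ strictly below the single-detector rate $\delta$, and controlling the correlation among neighbors' detectors induced by the shared observations in $\mathcal{X}^{k,m}$; once that reduction is established, the three bounds add and factor through $\Delta_1^*$ to give the stated inequality.
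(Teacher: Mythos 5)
Your decomposition is essentially the paper's own: the paper likewise splits the pull counts into a forced-exploration term ($pT$), a false-alarm term ($M\sigma T$), and the classical UCB counting argument with threshold $l=\lceil 8\xi\log T/(\Delta_1^{\min})^2\rceil$ plus a Chernoff--Hoeffding tail summing to $M(1+\frac{\pi^2}{3})$, after first proving (Lemma~\ref{lem:cfar}) that majority voting turns the per-agent false-alarm rate $\delta$ into $\sigma=\sum_{j\ge\lceil\eta_k/2\rceil}\binom{\eta_k}{j}\delta^j(1-\delta)^{\eta_k-j}<\delta$. Your treatments of groups (i) and (iii) match the paper's proof step for step.

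The genuine gap is in your false-alarm bookkeeping. In the paper (and in the underlying RBOCPD guarantee, Lemma~\ref{the:far}), $\delta$ and hence $\sigma$ are \emph{per-horizon} quantities: $\sigma$ bounds the probability that the cooperative detector fires at least once during the entire stationary stretch, not the probability of firing at a given round. So your step ``union bound over the $M$ arms and $T$ rounds limits the expected number of false-alarm-corrupted rounds to $M\sigma T$, each charged at most $\Delta_1^*$'' is off in two ways: the union over rounds is not how $\sigma$ enters, and charging only the rounds at which a spurious restart fires does not bound the damage, because a restart wipes out $N_t^{k,m}$ and $\hat{\mu}_t^{k,m}$ and re-triggers forced exploration, so every round \emph{after} it is also corrupted --- yet those rounds are covered neither by your group (ii) nor by your good-event analysis of groups (i) and (iii), which conditions on no false alarm occurring at all. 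The paper closes this hole with a coarser but airtight accounting: decompose with respect to the single event $\{\tau_1\le T\}$ (some false alarm occurs anywhere in the horizon), bound its probability by $M\sigma$ via a union bound over arms only (Lemma~\ref{lem:fap-sc}), and charge the full $T\Delta_1^*$ on that event, which yields exactly the $\Delta_1^* M\sigma T$ term. Incidentally, your worry about correlation among neighbors' detectors induced by shared observations is well placed, but the paper does not resolve it either: Lemma~\ref{lem:cfar} computes the binomial probability as though neighbors' false alarms were independent.
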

\begin{IEEEproof}
See Appendix~\ref{app:rgt-sta}.
\end{IEEEproof}
\begin{remark}
\textbf{Lemma~\ref{lem:rgt-sta}} shows that the regret of RBO-Coop-UCB incorporates several sources: false alarm, forced exploration, and the classic regret of the UCB algorithm. 
\end{remark}
\begin{lemma}[False alarm probability in a stationary environment]
\label{lem:fap-sc}
Consider the stationary scenario, i.e., $N =1$, with confidence level $\delta > 0$. Define $\tau_1^{k,m}$ as the time of detecting the first change point of the $m$-th base arm. Let $\tau_1^k = \min_{m\in \mathcal{M}} \tau_1^{k,m}$, because RBO-Coop-UCB restarts the entire algorithm if a change point is detected on any of the base arms. The false alarm rate under cooperation is \cite{alami2020restarted,zhou2020near}
\begin{gather}
P(\tau_1^k \leq T) \leq \sum_{m=1}^M P(\tau_1^{k,m} \leq \tau) \leq  M\sigma
\end{gather}
\end{lemma}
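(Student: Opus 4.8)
The plan is to peel the statement apart into two separate inequalities and dispatch each with a standard tool. Since the environment is stationary ($N=1$), there is no genuine change point, so every alarm raised by any detector is by definition a false alarm, and $\tau_1^{k,m} \leq T$ is exactly the event that agent $k$'s cooperative detector for arm $m$ fires within the horizon. Because $\tau_1^k = \min_{m} \tau_1^{k,m}$, the event $\{\tau_1^k \leq T\}$ coincides with the union $\bigcup_{m=1}^M \{\tau_1^{k,m} \leq T\}$, so the first (left) inequality is an immediate consequence of the union bound over the $M$ arms, with no further work required.

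For the right inequality it suffices to establish the per-arm false-alarm bound $P(\tau_1^{k,m} \leq T) \leq \sigma$ and then sum over $m$. First I would invoke the single-detector false-alarm guarantee of RBOCPD from \cite{alami2020restarted}: on a stationary Bernoulli stream, the probability that the forecaster weights ever satisfy the restart criterion \eqref{eq:res} within a segment is controlled by the confidence level $\delta$. Next I would fold in the cooperation mechanism of \textbf{Definition~\ref{def:coop-res}}: agent $k$ declares a restart for arm $m$ only when at least $\lceil \eta_k/2 \rceil$ of its neighbors raise an alarm inside the memory window of length $d$. Because requiring a majority of the $\eta_k$ neighbor-detectors to fire is a strictly stronger condition than a single detector firing, the cooperative false-alarm event is contained in (and typically far smaller than) the single-detector alarm event, yielding an effective cooperative rate $\sigma \leq \delta$. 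Taking $\sigma$ to be this effective rate gives $P(\tau_1^{k,m} \leq T) \leq \sigma$, and summing over the $M$ arms produces the claimed bound $M\sigma$.

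The main obstacle is the statistical dependence among neighboring detectors: because agents share overlapping reward streams through the graph, the per-neighbor alarm events $\{r_i^{j,m} > 0\}$ are correlated rather than independent, so the strict reduction $\sigma < \delta$ cannot simply be read off from a product of individual rates. The conservative inequality $\sigma \leq \delta$ does follow from pure monotonicity—enlarging the required vote count only shrinks the alarm region—but to argue the quantitatively smaller $\sigma$ I would expand the majority-voting indicator as a sum over subsets $S \subseteq \mathcal{N}_k$ with $|S| \geq \lceil \eta_k/2 \rceil$, bound each joint alarm probability using the RBOCPD guarantee applied to the members of $S$, and absorb the window length $d$ (controlled by the detection-delay bound of \textbf{Theorem~\ref{the:dd}}) into a union over the at most $d$ time indices inside the window. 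Tracking the dependence carefully while keeping the combinatorial sum over voting subsets from reintroducing the very factor it is meant to save is the delicate point of the argument.
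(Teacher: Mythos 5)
Your first inequality is fine and matches the paper: since $\tau_1^k = \min_m \tau_1^{k,m}$, the event $\{\tau_1^k \leq T\}$ is exactly the union $\bigcup_{m=1}^M\{\tau_1^{k,m}\leq T\}$, and the union bound over the $M$ arms gives the middle expression. The gap is in your per-arm bound $P(\tau_1^{k,m}\leq T)\leq\sigma$. You claim that requiring a majority of the $\eta_k$ neighbor-detectors to fire is a stronger condition than a single detector firing, so the cooperative alarm event is contained in the single-detector alarm event and $\sigma\leq\delta$ follows ``from pure monotonicity.'' That containment is false: writing $A_j$ for the event that neighbor $j$'s detector fires, the majority event is not contained in $A_j$ for any \emph{fixed} $j$ (with $\eta_k=3$ and threshold $2$, the outcome $A_1\cap A_2\cap A_3^c$ lies in the majority event but not in $A_3$). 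Monotonicity only gives containment in the threshold-one event $\bigcup_j A_j$, whose probability the union bound controls by $\eta_k\delta$, not $\delta$; Markov's inequality applied to $\sum_j\mathbbm{1}\{A_j\}$ gives at best about $2\delta$. Worse, without an independence assumption the inequality $\sigma\leq\delta$ is simply not true: with $\eta_k=2$ the threshold is $\lceil\eta_k/2\rceil=1$, and two disjoint alarm events of probability $\delta$ each give a cooperative rate of $2\delta$; with $\eta_k=3$, placing mass $\delta/2$ on each pairwise intersection $A_i\cap A_j$ (and none on the triple) gives a cooperative rate of $3\delta/2>\delta$. Under adversarial correlation, majority voting can amplify rather than attenuate false alarms.

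The paper closes this step by a different route: it invokes its Lemma on the false alarm rate under cooperation, which treats the neighbors' alarm events as independent Bernoulli$(\delta)$ outcomes and computes $\sigma$ as the binomial tail $\sum_{j\geq\lceil\eta_k/2\rceil}\binom{\eta_k}{j}\delta^j(1-\delta)^{\eta_k-j}$, which is indeed smaller than $\delta$ for small $\delta$; combining this with the RBOCPD per-detector false alarm guarantee yields $P(\tau_1^{k,m}\leq T)\leq\sigma$, and the union bound over arms finishes as you do. Your instinct that independence is questionable here (neighboring agents observe overlapping reward streams, so their detectors are correlated) is a legitimate criticism of the paper itself, but your proposed monotonicity shortcut does not repair it, and your fallback plan --- expanding over voting subsets and bounding the joint alarm probabilities --- is only sketched, with the key difficulty of controlling those joint probabilities under dependence explicitly left open. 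As written, the proposal does not establish the second inequality.
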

\begin{IEEEproof}
See Appendix~\ref{app:fap-sc}.
\end{IEEEproof}
\begin{lemma} 
\label{lem:p-delay}
Define $\mathcal{C}_n^k$ as the event when all the change points up to the $n$-th one have been detected successfully by agent $k$ within a small delay. Formally,  \cite{besson2019generalized}:
\begin{gather}
\mathcal{C}_n^k = \{\forall i \leq n, \tau_i^k \in \{\nu_i+1, \ldots, \nu_i+d_i^k\} \}, 
\label{eq:gec}
\end{gather}
and 
\begin{gather}
P(\mathcal{C}_n^k) \leq P(\tau_n^k < \nu_n | \mathcal{C}_{n-1}^k) + P(\tau_n^k > \nu_n + d_n^k | \mathcal{C}_{n-1}^k) 
\end{gather}
Then, $(a) = P(\tau_n^k < \nu_n | \mathcal{C}_{n-1}^k) \leq M\sigma$ and $(b) = P(\tau_n^k > \nu_n + d_n^k | \mathcal{C}_{n-1}^k) \leq \delta$, where $\tau_n^k$ is the detection time of the $n$-th change point.
\end{lemma}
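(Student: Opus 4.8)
The plan is to prove the two claimed bounds $(a)$ and $(b)$ separately, since the first is a false-alarm statement and the second a detection-delay statement, both conditioned on the good event $\mathcal{C}_{n-1}^k$ that every earlier change point was caught within its prescribed window. The decomposition itself comes from a union bound: conditioned on $\mathcal{C}_{n-1}^k$, the only ways the $n$-th detection can fall outside $\{\nu_n+1,\ldots,\nu_n+d_n^k\}$ are that it fires too early ($\tau_n^k < \nu_n$) or too late ($\tau_n^k > \nu_n + d_n^k$), so it suffices to control these two events individually.

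For part $(a)$, I would first unpack what $\mathcal{C}_{n-1}^k$ buys us: it forces $\tau_{n-1}^k \in \{\nu_{n-1}+1,\ldots,\nu_{n-1}+d_{n-1}^k\}$, so agent $k$ restarted strictly after the true change $\nu_{n-1}$. Consequently every observation the detector accumulates in $\mathcal{X}^{k,m}$ after this restart and before $\nu_n$ is drawn from the single stationary law $\mathcal{B}(\mu_{n-1}^m)$, using the segment indexing of Assumption~\ref{def:ps-Bernoulli}. Restricted to the window $(\tau_{n-1}^k,\nu_n)$ the detector therefore faces an exactly stationary instance, and $\{\tau_n^k < \nu_n\}$ is precisely a false alarm on one of the $M$ arms. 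Invoking Lemma~\ref{lem:fap-sc}, whose cooperative false-alarm bound $M\sigma$ was derived for exactly this stationary regime and already absorbs both the union over arms and the majority-voting rule of Definition~\ref{def:coop-res}, yields $(a) \le M\sigma$.

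For part $(b)$, conditioned on $\mathcal{C}_{n-1}^k$ the detector enters the $n$-th segment from a clean restart, so $\{\tau_n^k > \nu_n + d_n^k\}$ is a miss-detection after the true change $\nu_n$. Here I would appeal to the non-asymptotic detection-delay guarantee of RBOCPD in Theorem~\ref{the:dd}: at confidence level $\delta$ the change is caught within $\mathcal{D}_{\Delta,r,\nu_n}$ samples with probability at least $1-\delta$. The definition of $d_n^{k,m}$ in Assumption~\ref{assump:d} is engineered precisely so that $d_n^k$ (the per-agent delay, read as $\max_m d_n^{k,m}$) converts this sample-count delay into a time-step delay by inflating it with the forced-exploration spacing $\lfloor M/p\rfloor$, guaranteeing that $d_n^k$ time steps furnish at least $\mathcal{D}_{\Delta,r,\nu_n}$ fresh per-arm observations. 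The failure probability is then controlled by the single-detector guarantee, giving $(b) \le \delta$; since the restart-memory mechanism of Definition~\ref{def:coop-res} can only advance a slow detector once a faster neighbor triggers, cooperation cannot raise the miss-detection probability above the single-agent value $\delta$.

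The hard part will be the reduction in part $(a)$: making rigorous that conditioning on $\mathcal{C}_{n-1}^k$ genuinely produces an i.i.d.\ stationary sequence on $(\tau_{n-1}^k,\nu_n)$ despite $\tau_{n-1}^k$ being a random, data-dependent restart time, so that Lemma~\ref{lem:fap-sc} applies without circularity. A secondary subtlety is bookkeeping the cooperation layer consistently across both parts, confirming that majority voting and the restart-memory window neither inflate the false-alarm rate beyond $M\sigma$ nor stretch the effective delay beyond what $d_n^k$ already accommodates.
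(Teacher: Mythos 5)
Your proposal is correct and takes essentially the same approach as the paper: part (a) is reduced to the cooperative stationary false-alarm bound of Lemma~\ref{lem:fap-sc}, and part (b) invokes the RBOCPD detection-delay guarantee (Lemma~\ref{the:dd}) together with Assumption~\ref{assump:d} and the forced-exploration spacing $\lfloor M/p \rfloor$ (Fact~\ref{fact:gap} in the paper) to convert the sample-count delay into a time-step delay, exactly as the paper does. The only caveat is your closing remark that the cooperation mechanism ``can only advance a slow detector'': majority voting in fact suppresses lone detections (that is precisely how it filters false alarms), so that justification is backwards---but since the paper's own proof of (b) likewise applies the single-detector guarantee without engaging the voting layer, this is not a departure from the paper's argument.
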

\begin{IEEEproof}
See Appendix~\ref{app:p-delay}.
\end{IEEEproof}
\begin{theorem}
\label{the:rgt}
Running \textbf{Algorithm~\ref{alg:r-coop-ucb}} with Assumption~\ref{assump:d}. Define the suboptimality gap in the $i$-th stationary segment as 
\begin{align}
    &\Delta_i^* =  \mu_t^* - \min_{m \in \mathcal{M}}\mu_t^m, \quad t \in [\nu_{i-1},\nu_i],   \notag \\
    &\Delta_i^{\min} = \mu_t^* - \max_{m \in \mathcal{M} / m*} \mu_t^m, \quad t \in [\nu_{i-1},\nu_i]. \notag 
\end{align}
then the expected cumulative regret of RBO-Coop-UCB with exploration probability $p$ and confidence level satisfies 
\begin{gather}
R_T^k \leq \sum_{i=1}^{N}\tilde{C}_i^k + \Delta^*T(p+2MN\sigma + M\delta),
\end{gather}
where $\tilde{C}_i^k = \Delta_i^*[M\lceil \frac{8 \log T}{(\Delta_i^{\min})^2} \rceil + M(1+\frac{\pi^2}{3})]$, and $\sigma < \delta$ is the maximum false alarm under cooperation.
\end{theorem}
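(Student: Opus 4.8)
The plan is to decompose the dynamic regret along the good event on which every change point is detected correctly and within the delay guaranteed by RBOCPD, and then to treat the complementary bad event as a catastrophic failure whose cost is at most $\Delta^* T$ weighted by its probability. Concretely, I would work with the event $\mathcal{C}_N^k$ of Lemma~\ref{lem:p-delay}, under which for every $i \le N$ the detected restart time $\tau_i^k$ lies in $\{\nu_i+1,\dots,\nu_i+d_i^k\}$, i.e., no restart fires prematurely and none is missed. Writing $R_T^k = \mathbb{E}[R_T^k\,\mathbbm{1}\{\mathcal{C}_N^k\}] + \mathbb{E}[R_T^k\,\mathbbm{1}\{\overline{\mathcal{C}_N^k}\}]$, the first term collects the nominal regret and the second the failure regret.

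First I would handle the good-event term. On $\mathcal{C}_N^k$, Assumption~\ref{assump:d} guarantees that consecutive segments are long enough that the detection windows do not overlap, so the algorithm restarts its UCB indices cleanly inside each segment. Hence, between a correct restart and the next change point, agent $k$ runs exactly the stationary cooperative UCB analysed in Lemma~\ref{lem:rgt-sta}. I would apply that lemma segment by segment, keeping only the genuine UCB contribution $\Delta_i^*\big[M\lceil 8\log T/(\Delta_i^{\min})^2\rceil + M(1+\pi^2/3)\big] = \tilde{C}_i^k$, since the false-alarm and forced-exploration pieces of the stationary bound are accounted for globally rather than per segment. Summing over $i=1,\dots,N$ yields $\sum_{i=1}^N \tilde{C}_i^k$. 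The forced-exploration steps occur at rate $p$, so across the whole horizon they contribute at most $\Delta^* pT$, and the regret accrued during each detection delay $d_i^k$ is of order $\log T/\Delta^2$ and is absorbed into the per-segment $\tilde{C}_i^k$ (this is where Assumption~\ref{assump:d} and the detection-delay theorem, Theorem~\ref{the:dd}, are invoked).

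Next I would bound the bad-event term by $\Delta^* T \cdot P(\overline{\mathcal{C}_N^k})$, since any trajectory incurs at most $\Delta^*$ per step. To control $P(\overline{\mathcal{C}_N^k})$ I telescope over segments, conditioning on $\mathcal{C}_{n-1}^k$ and splitting a failure at stage $n$ into a premature detection and a missed or late detection; Lemma~\ref{lem:p-delay} bounds these by $(a)\le M\sigma$ and $(b)\le\delta$ respectively, while Lemma~\ref{lem:fap-sc} supplies the per-segment false-alarm control $M\sigma$. Aggregating the premature-detection and false-alarm failures across the $N$ segments, with a factor of two absorbing both the stationary false-alarm bound of Lemma~\ref{lem:fap-sc} and the conditional premature-detection bound $(a)$ of Lemma~\ref{lem:p-delay}, produces $2MN\sigma$; the missed-detection events aggregate to $M\delta$. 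Multiplying by $\Delta^* T$ and adding the forced-exploration $\Delta^* pT$ gives the stated $\Delta^* T(p + 2MN\sigma + M\delta)$.

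The main obstacle I anticipate is the bad-event probability bookkeeping: getting the exact constants $2MN\sigma$ and $M\delta$ requires careful attention to what is conditioned on $\mathcal{C}_{n-1}^k$, to the union bound over the $M$ arms (each running its own detector, with a restart triggered if any arm fires), and to separating false-alarm from missed-detection contributions without double counting. The second delicate point is justifying that, on the good event, each segment truly reduces to the stationary analysis of Lemma~\ref{lem:rgt-sta} — this relies on Assumption~\ref{assump:d} to rule out overlapping detection windows and to ensure the delay regret is lower order, so that the per-segment UCB term $\tilde{C}_i^k$ is valid.
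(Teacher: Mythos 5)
Your proposal takes essentially the same route as the paper's proof: the same good events ($\mathcal{F}_n^k$, $\mathcal{T}_n^k$, $\mathcal{C}_n^k$), the same per-segment application of Lemma~\ref{lem:rgt-sta}, and the same use of Lemma~\ref{lem:p-delay} to charge premature and late detections, with the paper merely organizing the good/bad split as a segment-by-segment recursion rather than your one-shot decomposition over $\mathcal{C}_N^k$ followed by a telescoping union bound. Your version is sound (indeed your telescoping naturally yields $MN\sigma + N\delta$, slightly tighter than the paper's $2MN\sigma$ which arises from double counting in the recursion, so there is no need to force the factor of two), and both arguments share the same mild looseness of absorbing the detection-delay regret into the per-segment terms $\tilde{C}_i^k$.
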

\begin{IEEEproof}
See Appendix~\ref{app:the-rgt}.
\end{IEEEproof}
\begin{corollary}
\label{cor:rgt}
Let $\delta = \frac{1}{T}$ and $p = \sqrt{\frac{M\log T}{T}}$. Then the regret of RBO-Coop-UCB have the following upper bound:
\begin{gather}
R_T \leq \mathcal{O}(KNM\log T + K\sqrt{MT\log T})
\end{gather}
\end{corollary}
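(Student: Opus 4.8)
The goal of Corollary~\ref{cor:rgt} is to substitute the specified parameter values $\delta = \frac{1}{T}$ and $p = \sqrt{\frac{M\log T}{T}}$ into the per-agent regret bound of Theorem~\ref{the:rgt}, and then aggregate across all $K$ agents to obtain the group regret. The plan is to treat this as an asymptotic order-of-magnitude calculation: I would start from the bound $R_T^k \leq \sum_{i=1}^{N}\tilde{C}_i^k + \Delta^*T(p+2MN\sigma + M\delta)$ and analyze each summand separately, keeping only the dominant terms in $T$, $M$, $N$, and $K$.

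First I would handle the summation term $\sum_{i=1}^{N}\tilde{C}_i^k$. Since $\tilde{C}_i^k = \Delta_i^*[M\lceil \frac{8 \log T}{(\Delta_i^{\min})^2} \rceil + M(1+\frac{\pi^2}{3})]$, each segment contributes $\mathcal{O}(M\log T)$ after absorbing the suboptimality gaps $\Delta_i^*$ and $(\Delta_i^{\min})^{-2}$ into the constant (treating the gaps as bounded away from zero, as is standard). Summing over the $N$ stationary segments gives $\mathcal{O}(NM\log T)$ for a single agent. Next I would treat the second group of terms. Plugging in $p = \sqrt{\frac{M\log T}{T}}$ yields $\Delta^* T p = \mathcal{O}(T \sqrt{\frac{M\log T}{T}}) = \mathcal{O}(\sqrt{MT\log T})$, which is the exploration-cost term. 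For the false alarm and detection-delay terms, I would use $\sigma < \delta = \frac{1}{T}$ from the hypothesis, so that $\Delta^* T \cdot 2MN\sigma = \mathcal{O}(MN)$ and $\Delta^* T \cdot M\delta = \mathcal{O}(M)$; both are lower-order and absorbed into the $\mathcal{O}(NM\log T)$ term. Combining, the per-agent regret is $R_T^k \leq \mathcal{O}(NM\log T + \sqrt{MT\log T})$.

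Finally, I would pass from the single-agent bound to the group regret. Because the group regret $R_T$ defined in \eqref{eq:net-rgt} decomposes as the sum of the individual regrets over the $K$ agents, and the per-agent bound just derived is uniform in $k$ (the $\alpha^k$-dependence only affects constants, since $\alpha^k \in (-1,1)$ by the assumption following Definition~\ref{def:sample-rule}), I would simply multiply by $K$ to obtain $R_T \leq \mathcal{O}(KNM\log T + K\sqrt{MT\log T})$, matching the claimed bound.

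The main obstacle, and the step deserving the most care, is the justification that $\sigma < \delta = \tfrac{1}{T}$ actually holds for the chosen parameters and that the false-alarm contribution is genuinely lower-order. This hinges on Lemma~\ref{lem:fap-sc} and Lemma~\ref{lem:p-delay}, which bound the false alarm rate under cooperation by $M\sigma$ and the miss-detection probability by $\delta$; I would need to confirm that the detection-delay term $d_n^{k,m}$ (whose dependence on $\delta$ and segment lengths appears in Assumption~\ref{assump:d}) does not itself introduce extra $\log T$ or $\sqrt{T}$ factors that would dominate the exploration term once $\delta = \tfrac{1}{T}$ is substituted into the expression for $\mathcal{D}_{\Delta,r,\nu_n}$. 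The rest of the argument is routine order-of-magnitude bookkeeping, but this interaction between the delay bound and the parameter choices is where the claimed rate could silently break.
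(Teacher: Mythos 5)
Your proposal is correct and follows essentially the same route as the paper: substitute $\delta = \frac{1}{T}$ and $p = \sqrt{\frac{M\log T}{T}}$ into the bound of Theorem~\ref{the:rgt}, note that $Tp = \sqrt{MT\log T}$ while the false-alarm and delay terms $T(2MN\sigma + M\delta)$ collapse to the lower-order quantity $\mathcal{O}(MN)$ via $\sigma < \delta$, and sum the resulting per-agent bound over the $K$ agents. Your closing caveat about the detection delay is reasonable diligence, but the paper handles it upstream (Assumption~\ref{assump:d} and Lemma~\ref{lem:p-delay}) rather than in this corollary's proof, which is exactly the routine bookkeeping you describe.
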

\begin{IEEEproof}
See Appendix~\ref{app:cor}.
\end{IEEEproof}
\section{Experiments}
\label{sec:expriment}
In this section, we evaluate the performance of our proposed RBO-Coop-UCB algorithm in different non-stationary environments using synthetic datasets and real-world datasets. We compare RBO-Coop-UCB with five baselines from the cutting-edge literature and one variant of RBO-Coop-UCB. More precisely, we use \textbf{DUCB} \cite{garivier2011upper} and \textbf{SW-UCB} (Sliding Window UCB) \cite{garivier2011upper} as passively adaptive benchmarks for the piecewise-stationary multi-armed bandit, and \textbf{M-UCB} (Monitored-UCB) \cite{cao2019nearly} and \textbf{GLR-UCB} \cite{maillard2019sequential,besson2019generalized} as actively adaptive ones. For consistency, in addition to the non-cooperative setting (each agent runs the algorithm independently), we implement each of the above piecewise-stationary algorithms also in a cooperative setting (information-sharing). In addition, we compare with \textbf{UCB} \cite{auer2002finite} as a stochastic bandit policy and \textbf{EXP3} \cite{auer2002nonstochastic} as an adversarial one. Finally, to validate the effectiveness of cooperation in change point detection, we implement GLR-Coop-UCB, which has a similar flow to RBO-Coop-UCB: It includes information-sharing and cooperative change point detection decision-making. The only difference between RBO-Coop-UCB and GLR-Coop-UCB is the implemented change point detector. 
The hyperparameter in our experiments are as follow.
\begin{itemize}
    \item UCB: None.
    \item DUCB: Discount factor $\gamma = 1 - (4B)^{-1}\sqrt{N/T}$, here $\gamma = 1- \sqrt{N/T}/4$.
    \item SW-UCB: Sliding window length $\tau = 2B\sqrt{T \log T/N}$, here $\tau = 2\sqrt{T \log T/N}$.
    \item M-UCB: $\delta = \max_{i \in N, m \in \mathcal{M}}|\mu_i^m - \mu_{i+1}^m|$, window size $\omega = 800$, $b = [\omega \log(2MT^2)/2]^{1/2}$, and $\gamma = 0.05 \sqrt{\frac{(N-1)(2b+3\sqrt{\omega})}{2T}}$.
    \item GLR-UCB and GLR-Coop-UCB: $\delta = \frac{10}{T}$, and $p =  \sqrt{\frac{\log T}{T}}$.
    \item RBO-UCB and RBO-Coop-UCB: $\eta_{r,s,t} = \frac{10}{T}$, and $p = \sqrt{\frac{\log T}{T}}$.
\end{itemize}
Table~\ref{tab:cpd} summarizes the performance of different change point detectors in all datasets. In the following section, we analyze the algorithms' performance in details. All the experimental results are based on ten independent runs.
\begin{table}[!htp]
    \centering
    \begin{tabular}{ |c|p{1.3cm}|p{1.3cm}|p{0.8cm}|p{1cm}| } 
    \hline
     Experiment &  Algorithm & Correct \newline Detection & Delay (step) & False Alarm   \\
    \hline
    \multirow{4}{1.5cm}{Synthetic Dataset } & \footnotesize{RBO-Coop} & 93/120 & 63.1 & 0.002\%\\ 
    \cline{2-5}
    & RBO & 108/120 & 106.7 & 0.024\%\\ 
    \cline{2-5} 
    & \footnotesize{GLR-Coop} & 60/120 & 98.8 & 0\\ 
    \cline{2-5} 
    & GLR & 60/120 & 99.3 & 0.0003\%\\
    \hline
    \multirow{4}{1.5cm}{Yahoo! Dataset } & \footnotesize{RBO-Coop} & 314/400 &234.8 & 0.001\%\\ 
    \cline{2-5}
    & RBO & 358/400 & 257.2 & 0.004\%\\ 
    \cline{2-5} 
    & \footnotesize{GLR-Coop}  & 92/400 & 787.2 & 0.005\%\\ 
    \cline{2-5} 
    & GLR & 77/400 & 730.3 & 0.006\%\\
    \hline
    \multirow{4}{1.5cm}{Digital Market Dataset } & \footnotesize{RBO-Coop} & 481/770 & 79.0 & 0.003\% \\ 
    \cline{2-5}
    & RBO & 556/770 & 105.1 & 0.010\% \\ 
    \cline{2-5} 
    & \footnotesize{GLR-Coop}  & 178/770 & 242.2 & 0.009\% \\ 
    \cline{2-5} 
    & GLR & 144/770 & 194.2 & 0.011\% \\
    \hline
    \end{tabular}
    \caption{Performance of Change Point Detectors.}
    \label{tab:cpd}
\end{table}
\subsection{Synthetic Dataset}\label{sec:e1}
In this section, we consider the following MAMAB setting with synthetic dataset:
\begin{itemize}
    \item The network consists of $K = 3$ agents and the agents face an identical MAB problem with $M = 5$ arms. 
    \item There are $N = 4$ piecewise-stationary Bernoulli segments, where only one base arm changes its distribution between two consecutive piecewise-stationary segments. 
\end{itemize}
\textbf{Figure~\ref{fig:e1-network}} and \textbf{Figure~\ref{fig:e1-env}} respectively show the network and the arms' reward distributions.
\begin{figure}[!ht]
\begin{subfigure}{.23\textwidth}
  \centering
  \includegraphics[width=.85\linewidth]{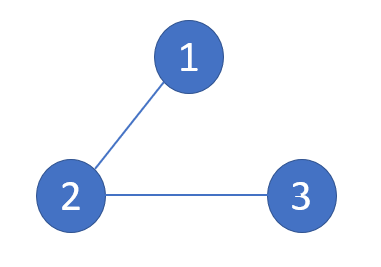}
  \caption{Observation network.}\label{fig:e1-network}
\end{subfigure}
\begin{subfigure}{.23\textwidth}
  \centering
  \vspace{-10pt}
  \includegraphics[width=.95\linewidth]{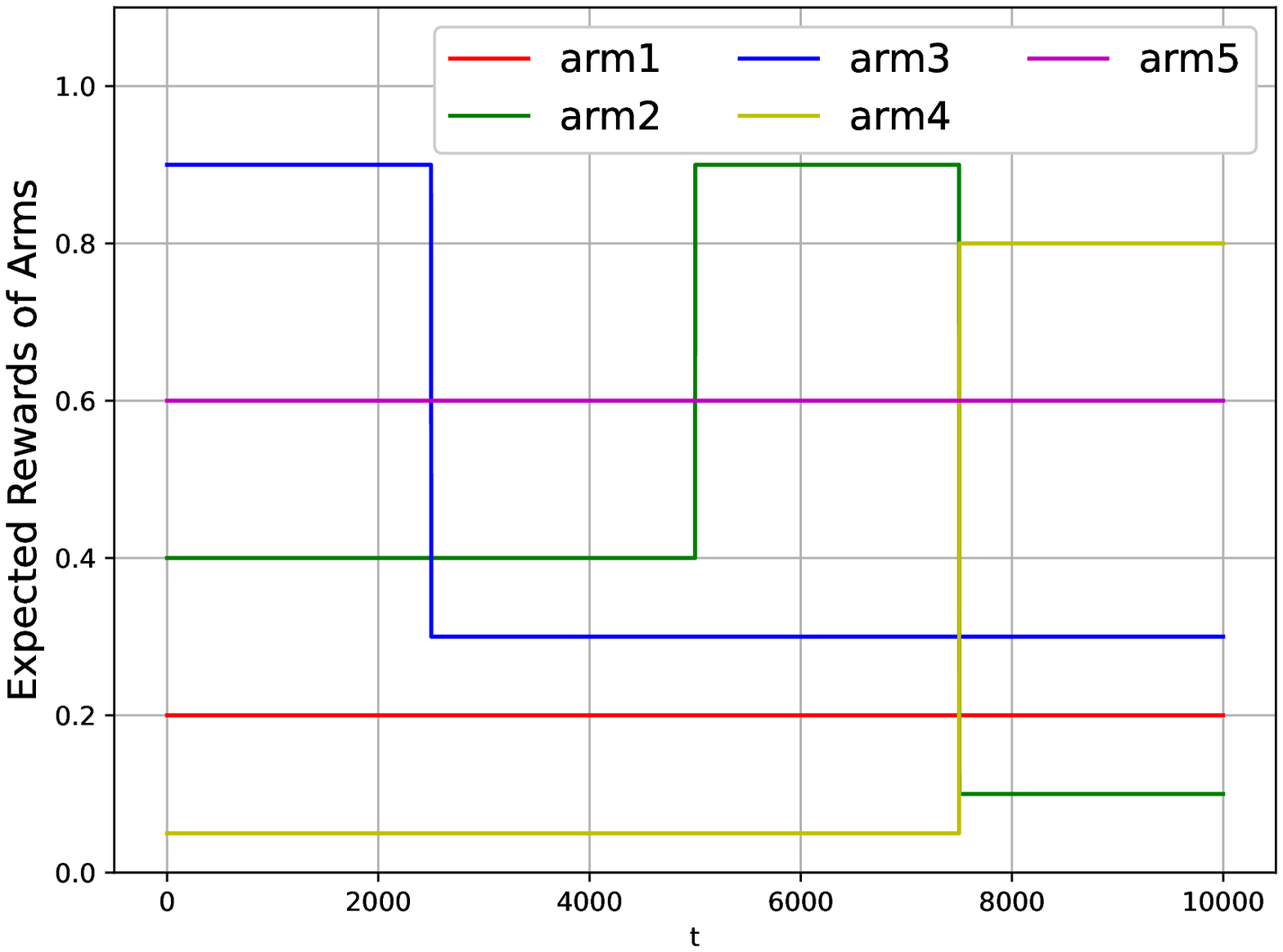}  
  \vspace{-5pt}
  \caption{Expected rewards of arms.}\label{fig:e1-env}
\end{subfigure}
\caption{Setting of Experiment I (synthetic data).}\label{fig:e1}
\end{figure}

\textbf{Figure~\ref{fig:e1-rgt}} summarizes the average regret of all algorithms.
\begin{figure}[!htp]
\vspace{-15pt}
  \centering
  \includegraphics[width=0.75\linewidth]{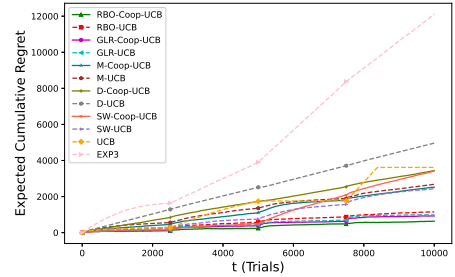}
  \caption{Expected cumulative regret for several benchmarks using a synthetic dataset.}
  \label{fig:e1-rgt}
  \vspace{-15pt}
\end{figure} 
Based on \textbf{Figure~\ref{fig:e1-rgt}}, RBO-Coop-UCB has the lowest regret among all piecewise-stationary algorithms. Compared with the algorithms with cooperation (such as RBO-Coop-UCB, GLR-Coop-UCB, and M-Coop-UCB, shown by solid lines), most algorithms without cooperation (such as RBO-UCB, GLR-UCB, and M-UCB, depicted by dashed lines) suffer from higher regrets. An exception is SW-UCB, whose regret is lower than that of SW-Coop-UCB. Our proposed cooperative framework is originally designed for actively adaptive algorithms (i.e., RBO-Coop-UCB, GLR-Coop-UCB, and M-Coop-UCB) and the cooperative restart decision-making does not exist in the passively adaptive algorithms (D-Coop-UCB, SW-Coop-UCB), that could be a reason for the higher regret in SW-Coop-UCB. In general, most algorithms have a lower regret than UCB and EXP3, which means that for optimal decision-making, it is essential to take into account the environment dynamics. Note that RBO-Coop-UCB performs better than GLR-Coop-UCB, whereas RBO-UCB has a higher regret than GLR-UCB. We investigate this effect in the following in detail. 

\textbf{Figure~\ref{fig:e1-cpd}} shows the histogram of time steps that different algorithms have identified as a change point. In \textbf{Figure~\ref{fig:e1-rbolog}}, there are three peaks ($t = 2500, 5000, 7000$). Especially, for $t=2500$ and $t = 7500$, the times to be regarded as change point reaches almost 30 (three agents ten times). That means, with a high probability, RBO-Coop-UCB, and RBO-UCB detect the changes. Compared with RBO-Coop-UCB, RBO-UCB has more false alarms. \textbf{Figure~\ref{fig:e1-glrlog}} demonstrates the change point detection of GLR-Coop-UCB and GLR-UCB. Compared with the algorithms with RBOCPD, those with GLRCPD benefit from fewer false alarms; however, they ignore the second change point $t = 5000$. \textbf{Figure~\ref{fig:e1-coop}} shows that RBO-Coop-UCB has a better performance than GLR-Coop-UCB in change point detection: It provides a higher correct detection rate for the first and third change points and has a smaller time step delay. Besides, RBO-Coop detects the GLR ``undetectable" change point (second change point), which indicates its wider detectable range. In \textbf{Figure~\ref{fig:e1-nocoop}}, RBO has a higher correct detection rate than GLR; nevertheless, the false alarm rate of RBO is also very high, which is not negligible. The statistics shown in Table~\ref{tab:cpd} also lead to the same conclusion. Frequent false alarm increases regret in RBO-UCB. In general, RBO-Coop and GLR-Coop have better detection than RBO and GLR, especially RBO-Coop, which proves the designed cooperation mechanism is effective and particularly suitable for the UCB algorithm incorporating RBOCPD. These are consistent with the conclusion drawn based on \textbf{Figure~\ref{fig:e1-rgt}}.
\begin{figure}[!ht]
\begin{subfigure}{.23\textwidth}
  \centering
  \includegraphics[width=.9\linewidth]{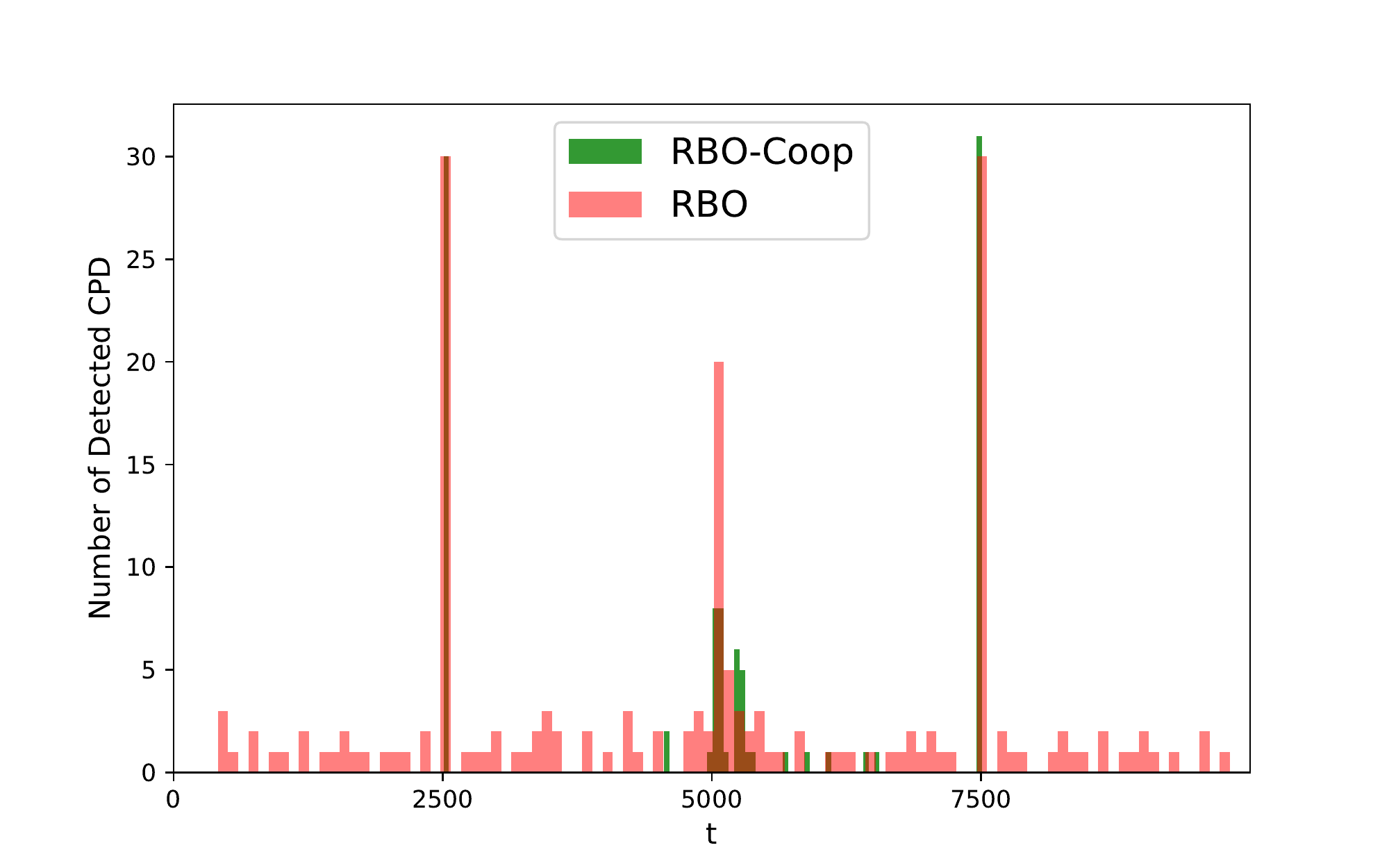}
 \caption{Change points detected by RBO-Coop and RBO.}
 \label{fig:e1-rbolog}
\end{subfigure}
\begin{subfigure}{.23\textwidth}
  \centering
  \includegraphics[width=.9\linewidth]{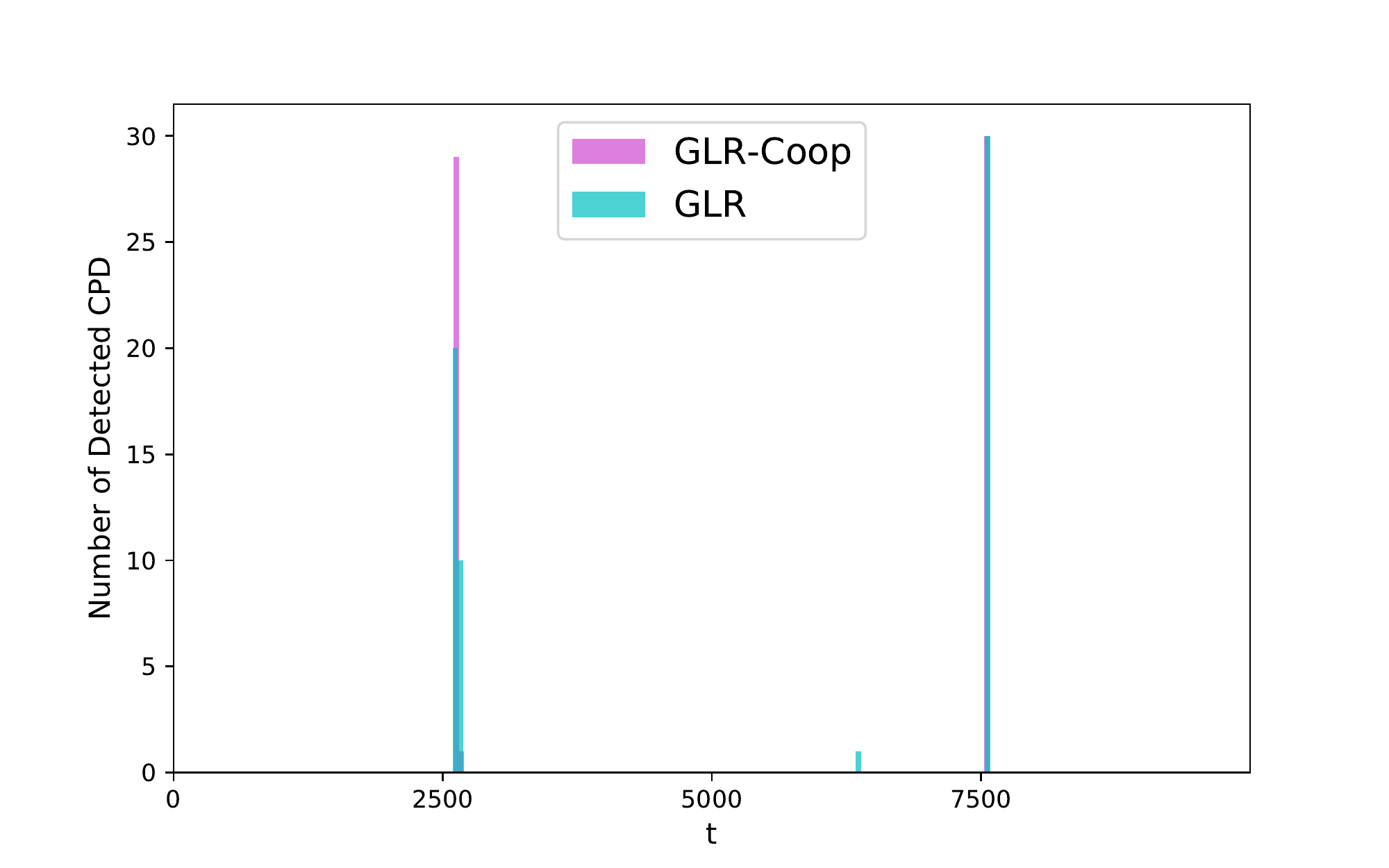}  
  \caption{Change points detected by GLR-Coop and GLR.}
  \label{fig:e1-glrlog}
\end{subfigure}
\newline
\begin{subfigure}{.23\textwidth}
  \centering
  \includegraphics[width=.9\linewidth]{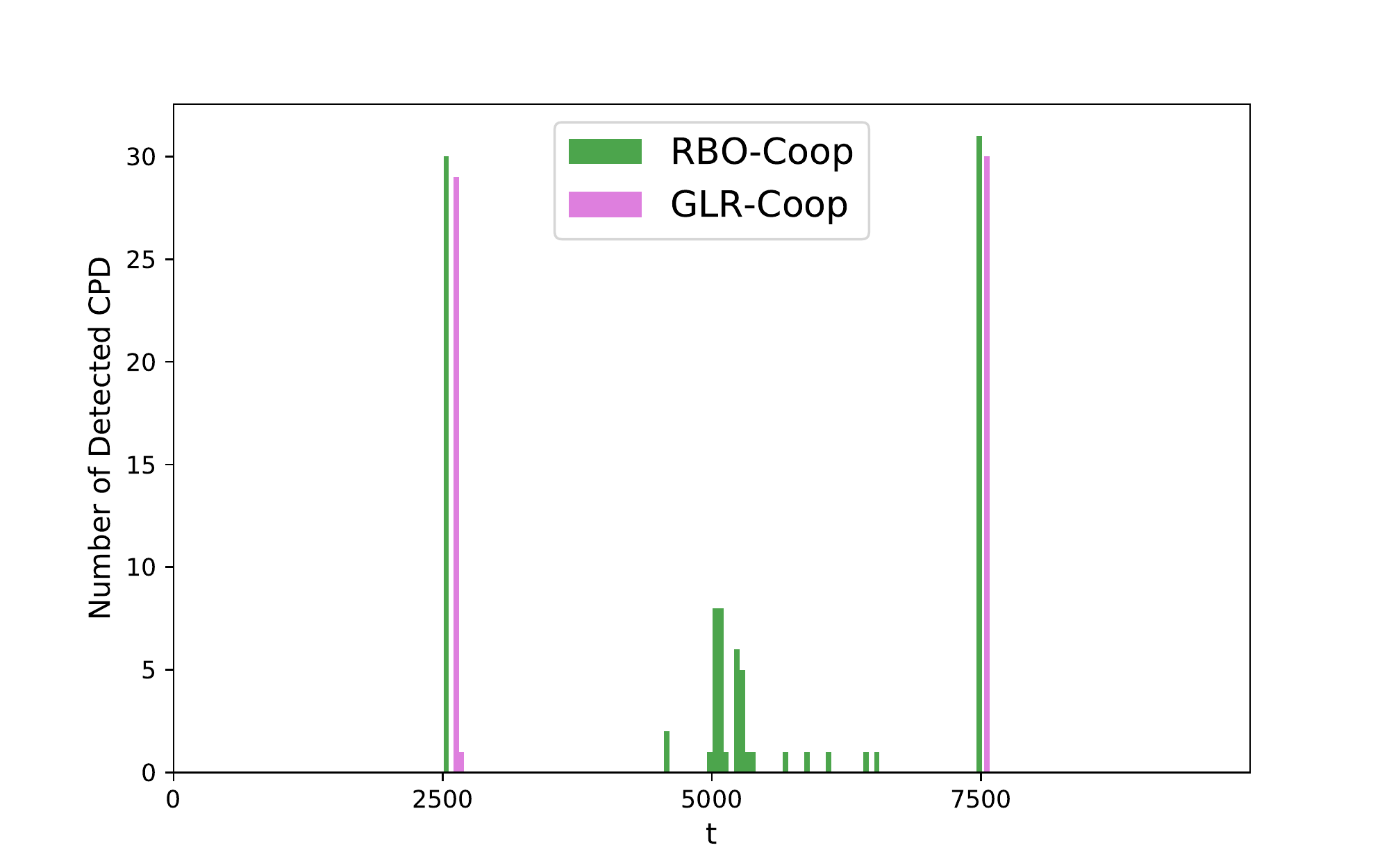}
  \caption{Change points detected by RBO-Coop and GLR-Coop.}
  \label{fig:e1-coop}
\end{subfigure}
\begin{subfigure}{.23\textwidth}
  \centering
  \includegraphics[width=.9\linewidth]{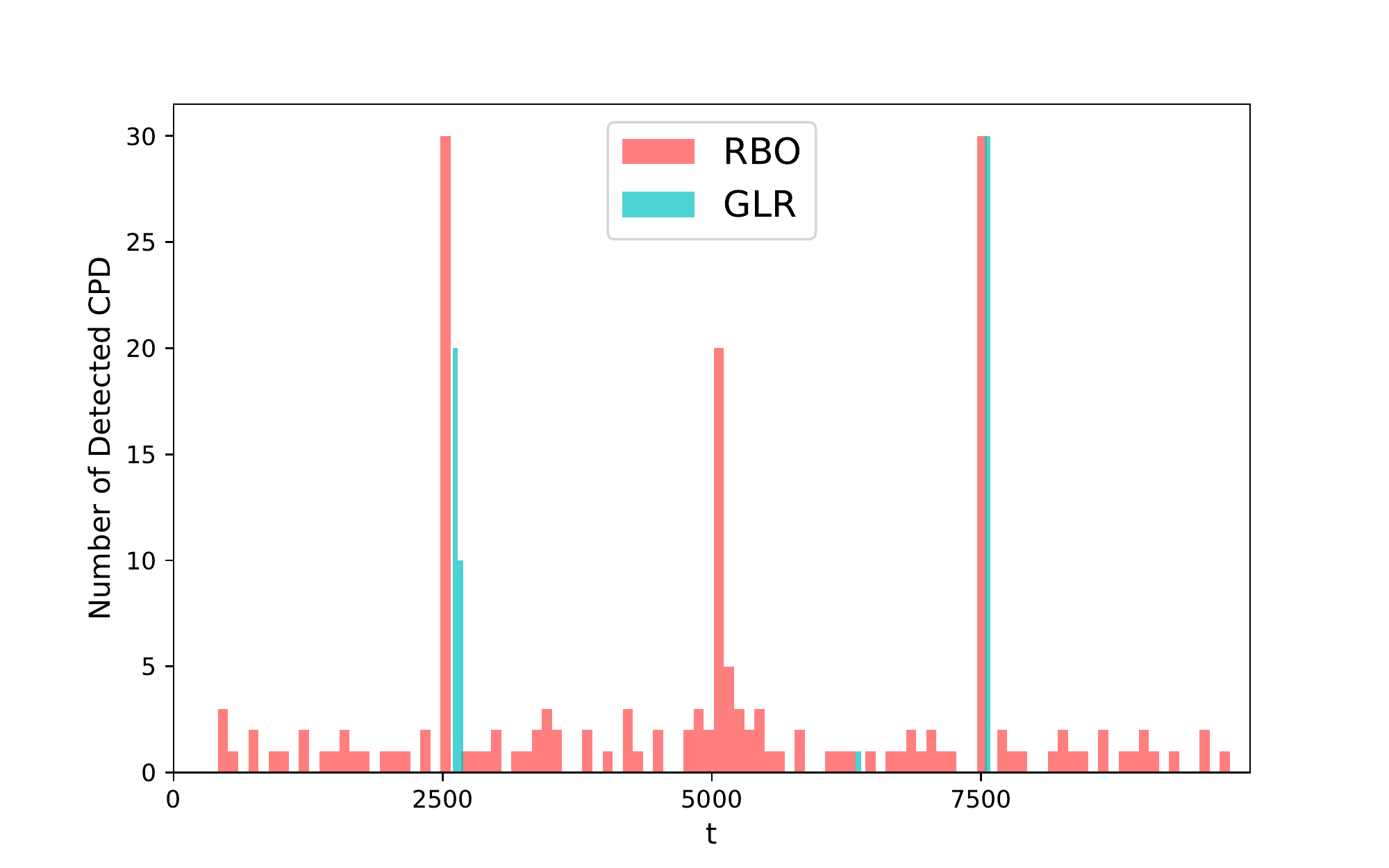}  
    \caption{Change points detected by RBO and GLR.}
    \label{fig:e1-nocoop}
\end{subfigure}
\caption{Change point detection in different algorithms on a  synthetic dataset.}
\label{fig:e1-cpd}
\end{figure}

%
To clarify the cooperation mechanism in change point detection, we show the detection record and the final decisions in Figure~\ref{fig:e1-details}. The figure illustrates the detection of each individual agent together with the final decision based on the neighborhood majority voting for arm 2. In that arm, the first change point occurs at $t=5000$. First, agent 2 detects that change, and then agent 1. Given the majority voting, agent 1 and 2 will restart their MAB policies after agent 1 detects the change. After several time steps, when agent 3 detects the change point, it has the restart history from its neighbor agent 2. Thus, it also restarts immediately at this time step. The same decision process follows by the second change at $t = 7500$. Agent 2 suffers from several false alarms between the two points, which, however, the cooperation mechanism filters out. Thus, in the final decision, this period does not include any restart. In conclusion, the developed cooperation mechanism significantly reduces the false alarm rate and improves the chances of correct change detection. 
\begin{figure}[!htp]
  \centering
  \vspace{-5pt}
  \includegraphics[width=0.6\linewidth]{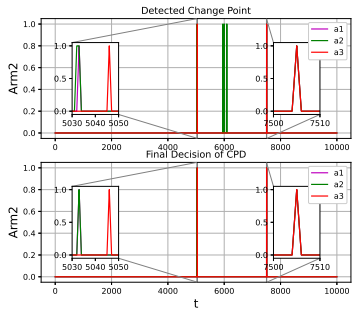}
  \caption{Detected Change Point and Final Decision.}
  \label{fig:e1-details}
  \vspace{-15pt}
\end{figure} 
\subsection{Real World Dataset}
In this section, we evaluate our proposed algorithm based on two different real-world datasets: Yahoo dataset\footnote{Yahoo! Front Page Today Module User Click Log Dataset on \url{https://webscope.sandbox.yahoo.com}} and digital marketing, which are same as the real-world dataset in \cite{cao2019nearly}. To be more detailed, in Yahoo dataset we verify the effectiveness in piecewise-stationary Bernoulli distribution, which is also our original design while with the digital marketing dataset we verify a more general situation. 
\subsubsection{Yahoo! Dataset}
We evaluate our proposed algorithm based on a real-world dataset from Yahoo, which contains the user click log for news articles displayed on the Yahoo! Front Page \cite{li2011unbiased}. To prepare the dataset, we follow the lines of \cite{cao2019nearly}. Therein, the authors preprocess the \textsl{Yahoo!} dataset by manipulating it. They then demonstrate the arm distribution information in Figure 3 of that paper.  To adapt to our multi-agent setting, we implement the experiment with $K=5$ agents. We enlarge the click rate of each arm by ten times \cite{zhou2020near}.\footnote{This step is essential because in the original setting, the gap between RBOCPD and GLRCPD can be smaller than the minimum detectable value.}~ \textbf{Figure~\ref{fig:e2-network}} and \textbf{Figure~\ref{fig:e2-env}} respectively show 
the network and the arms' reward distributions.
\begin{figure}[!ht]
\begin{subfigure}{.23\textwidth}
  \centering
  \includegraphics[width=.65\linewidth]{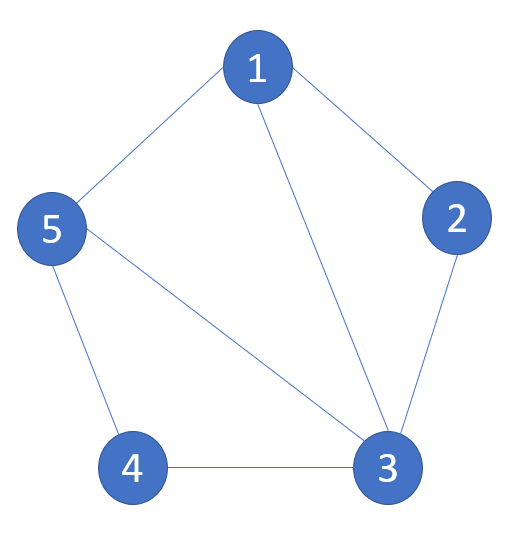}
  \caption{Observation network.}
  \label{fig:e2-network}
\end{subfigure}
\begin{subfigure}{.24\textwidth}
  \centering
  \vspace{-5pt}
  \includegraphics[width=.95\linewidth]{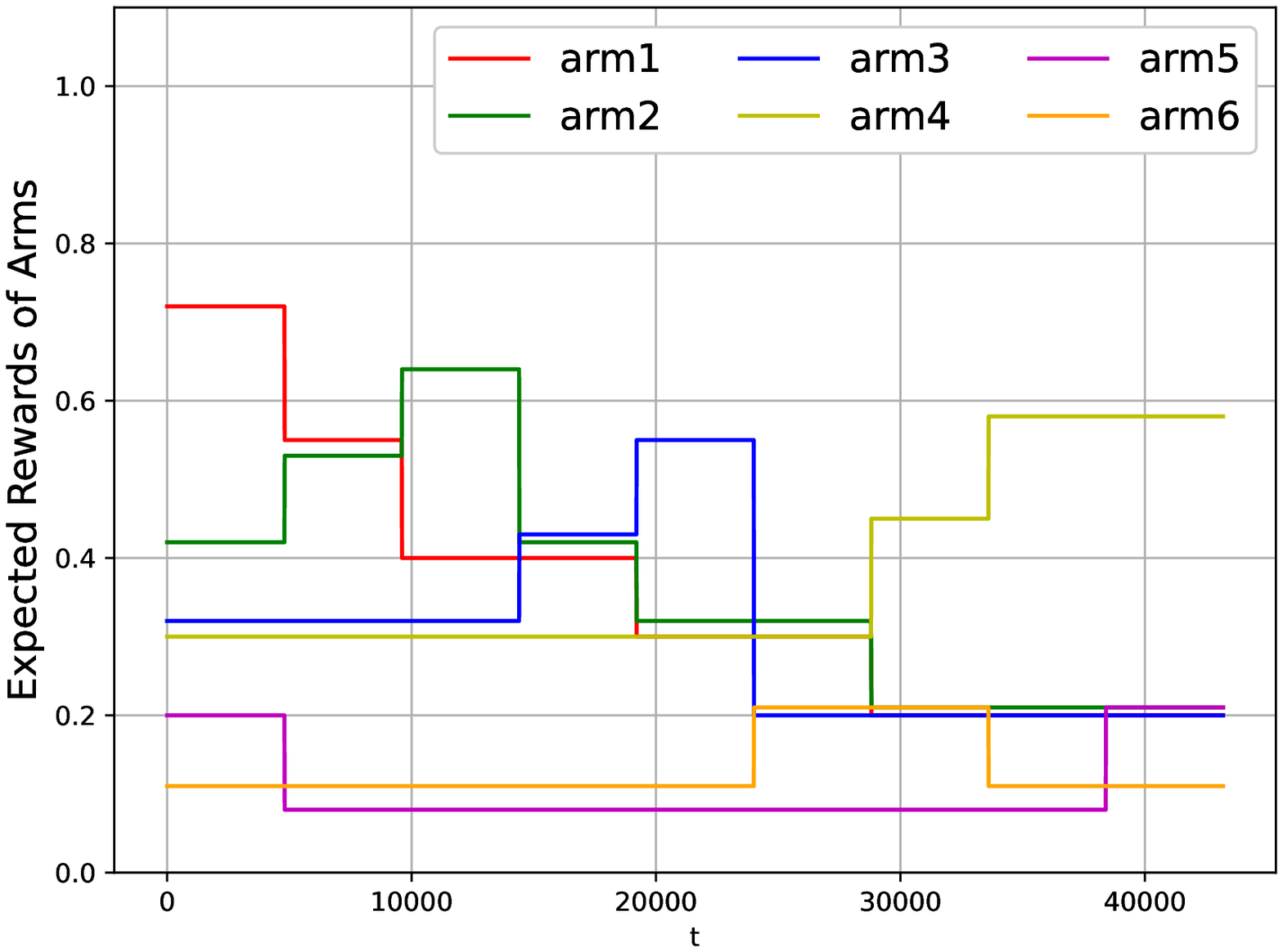}  
  \vspace{-5pt}
  \caption{Average rewards of arms.}
  \label{fig:e2-env}
\end{subfigure}
\caption{Setting of Experiment II (Yahoo! dataset).}
\label{fig:e2}
\end{figure}
\textbf{Figure~\ref{fig:e2-rgt}} shows the average regret of all algorithms while \textbf{Figure~\ref{fig:e2-cpd}} depicts the change point detection signals.

In \textbf{Figure~\ref{fig:e2-rgt}}, RBO-Coop-UCB and GLR-Coop-UCB have the lowest regret, while GLR-UCB has a regret lower than RBO-UCB. The reason is explainable using \textbf{Figure~\ref{fig:e2-cpd}} and Table~\ref{tab:cpd}. Algorithms with RBOCPD are more sensitive in detection; As such, they can detect change points faster. Besides, our proposed cooperation mechanism reduces false alarms while maintaining other performance indicators, such as correct detections and delays, at a comparable level. RBO-Coop detects more change points than GLR-Coop and guarantees fewer false alarms. In addition, according to the arms’ reward distributions, there are four change points in which the optimal arm remains fixed ($t=4800,19200, 33600, 38400$); thus, a restart at these steps increases the regret. That observation explains the similar regret performance of RBO-Coop-UCB and GLR-Coop-UCB despite the former detecting more change points. Besides, according to \textbf{Figure~\ref{fig:e2-rgt}}, the algorithms with cooperation (solid lines) always have lower regret than the algorithms without cooperation (dashed lines), which proves our designed cooperation mechanism enhances the performance.
\begin{figure}[!htp]
  \centering
  \vspace{-5pt}
  \includegraphics[width=0.75\linewidth]{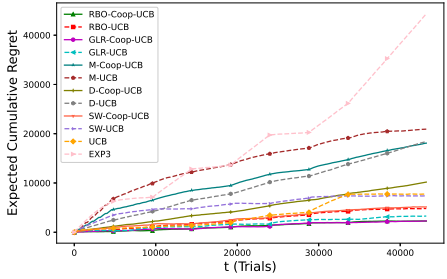}
  \caption{Expected cumulative regret for different algorithms using the pre-processed Yahoo! Dataset.}
  \label{fig:e2-rgt}
  \vspace{-15pt}
\end{figure} 
\begin{figure}[!ht]
\begin{subfigure}{.23\textwidth}
  \centering
  \includegraphics[width=.9\linewidth]{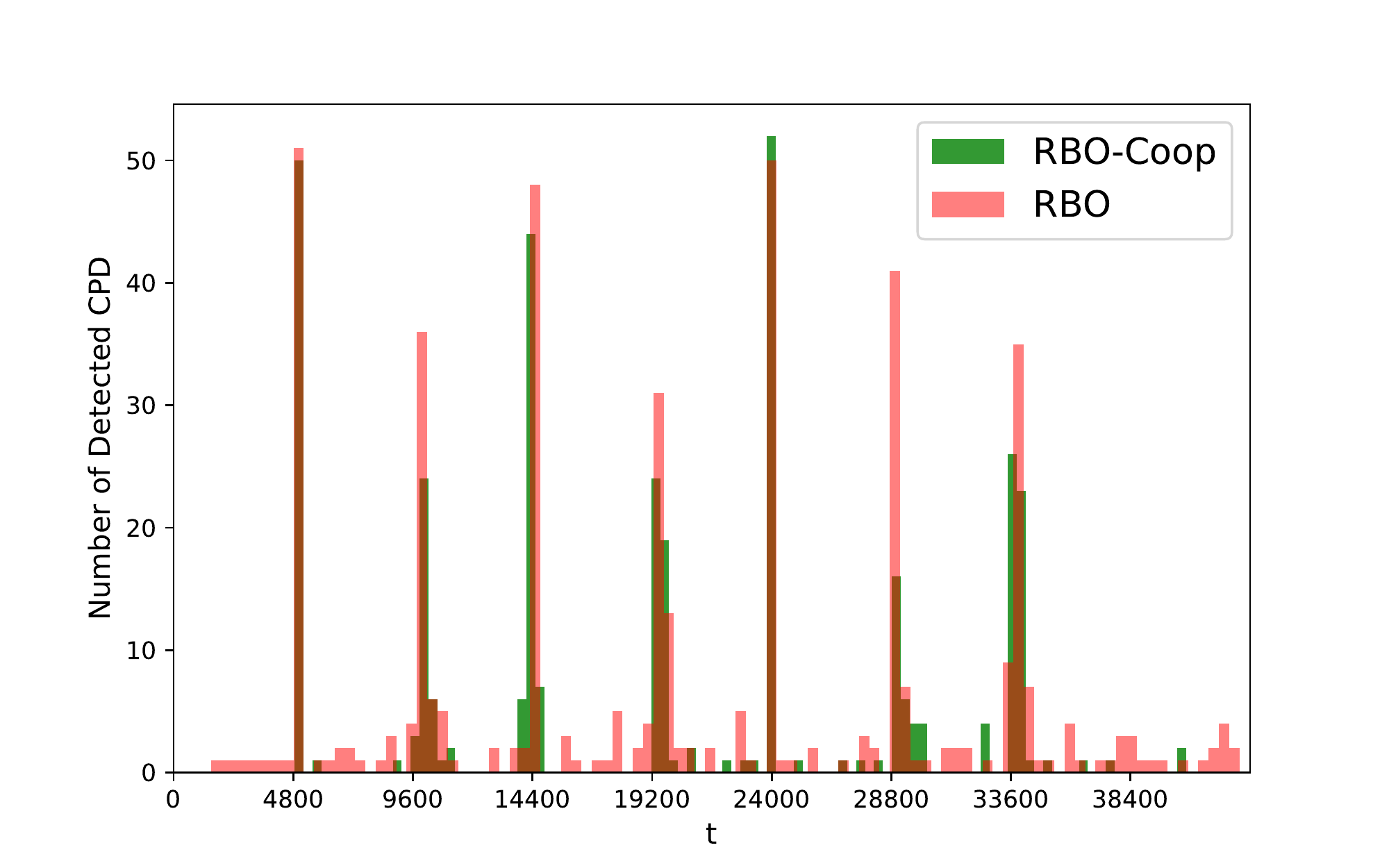}
 \caption{Change points detected by RBO-Coop and RBO.}
 \label{fig:e2-rbolog}
\end{subfigure}
\begin{subfigure}{.23\textwidth}
  \centering
  \includegraphics[width=.9\linewidth]{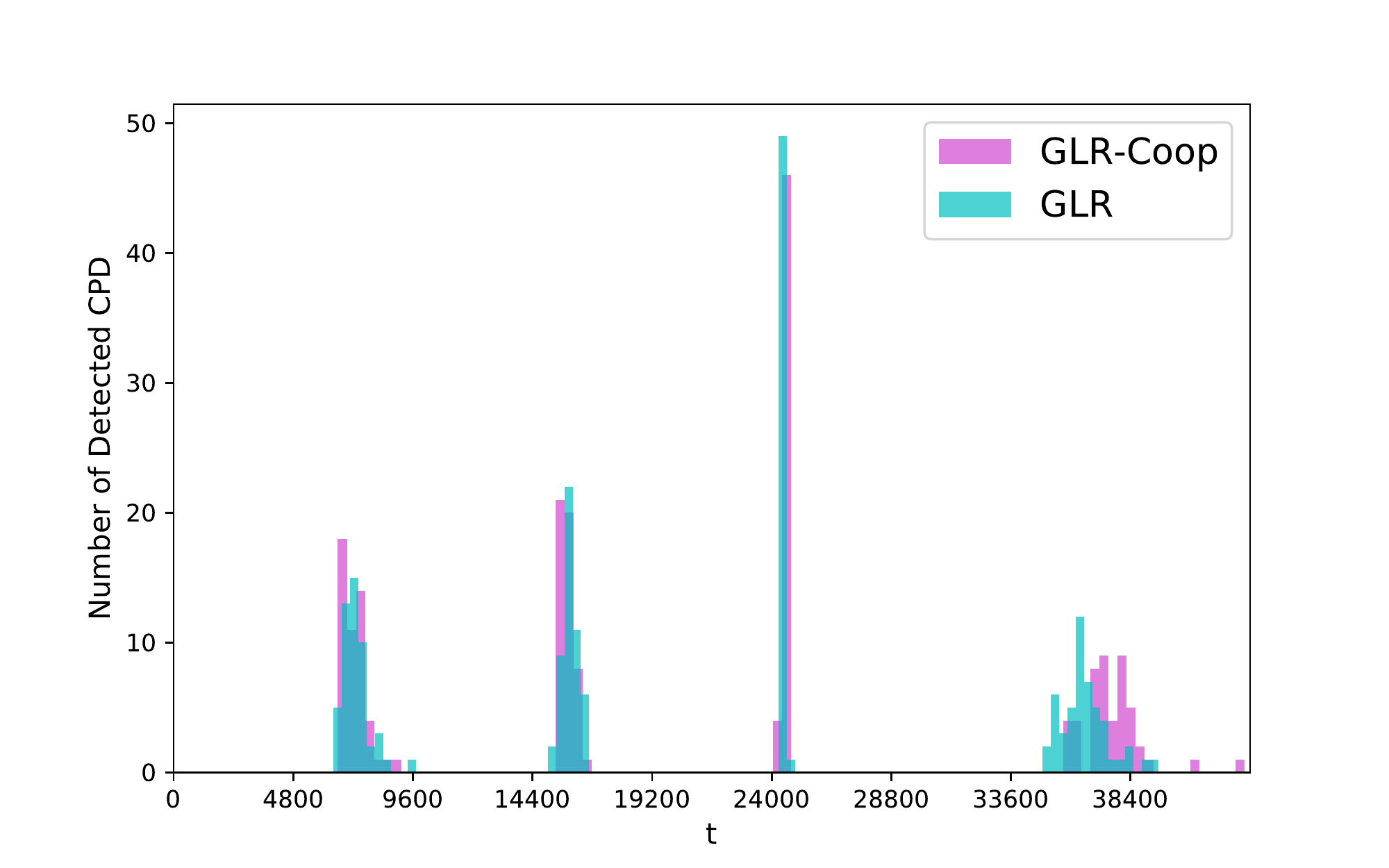}  
  \caption{Change points detected by GLR-Coop and GLR.}
  \label{fig:e2-glrlog}
\end{subfigure}
\newline
\begin{subfigure}{.23\textwidth}
  \centering
  \includegraphics[width=.9\linewidth]{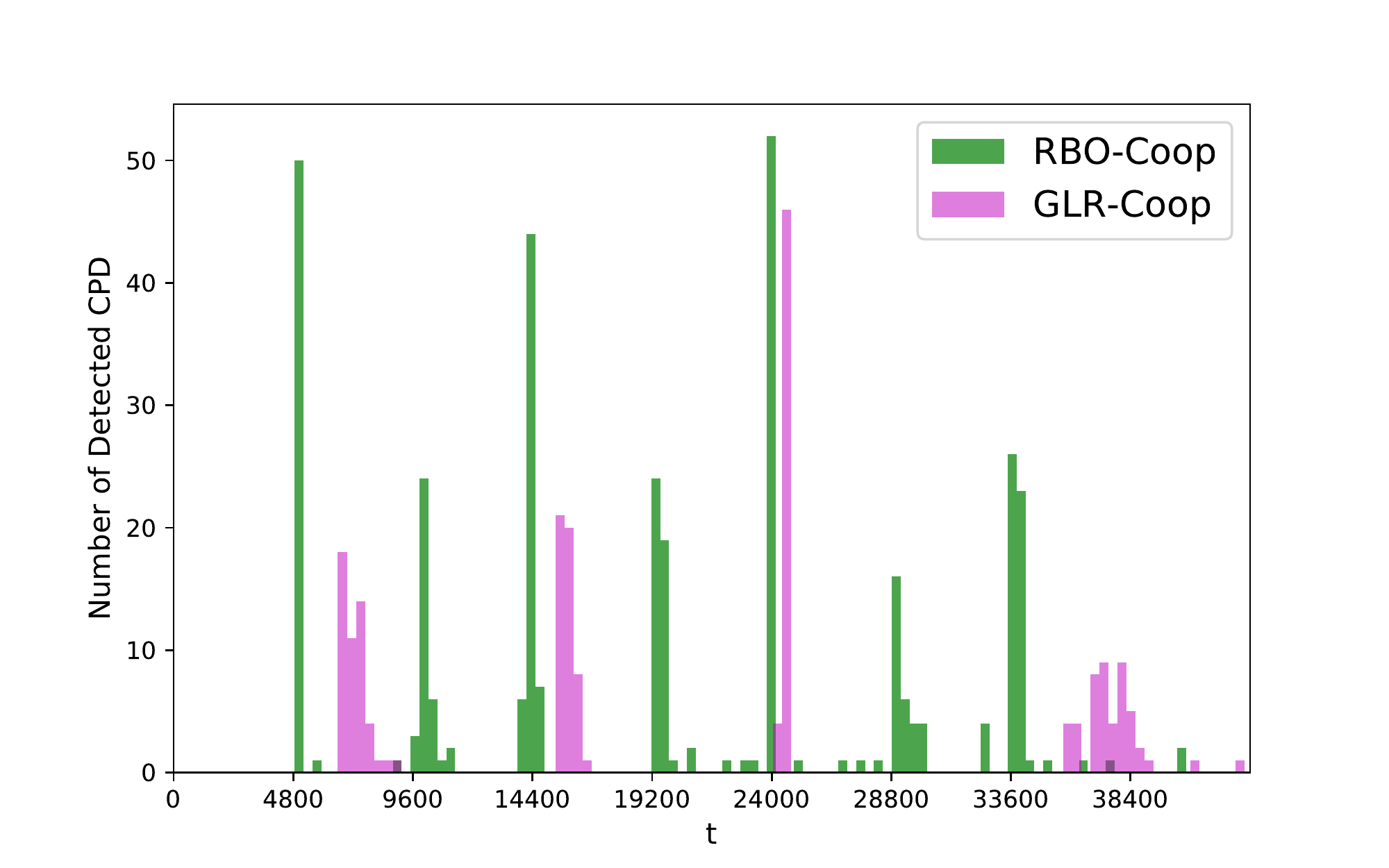}
  \caption{Change points detected by RBO-Coop and GLR-Coop.}
  \label{fig:e2-coop}
\end{subfigure}
\begin{subfigure}{.23\textwidth}
  \centering
  \includegraphics[width=.9\linewidth]{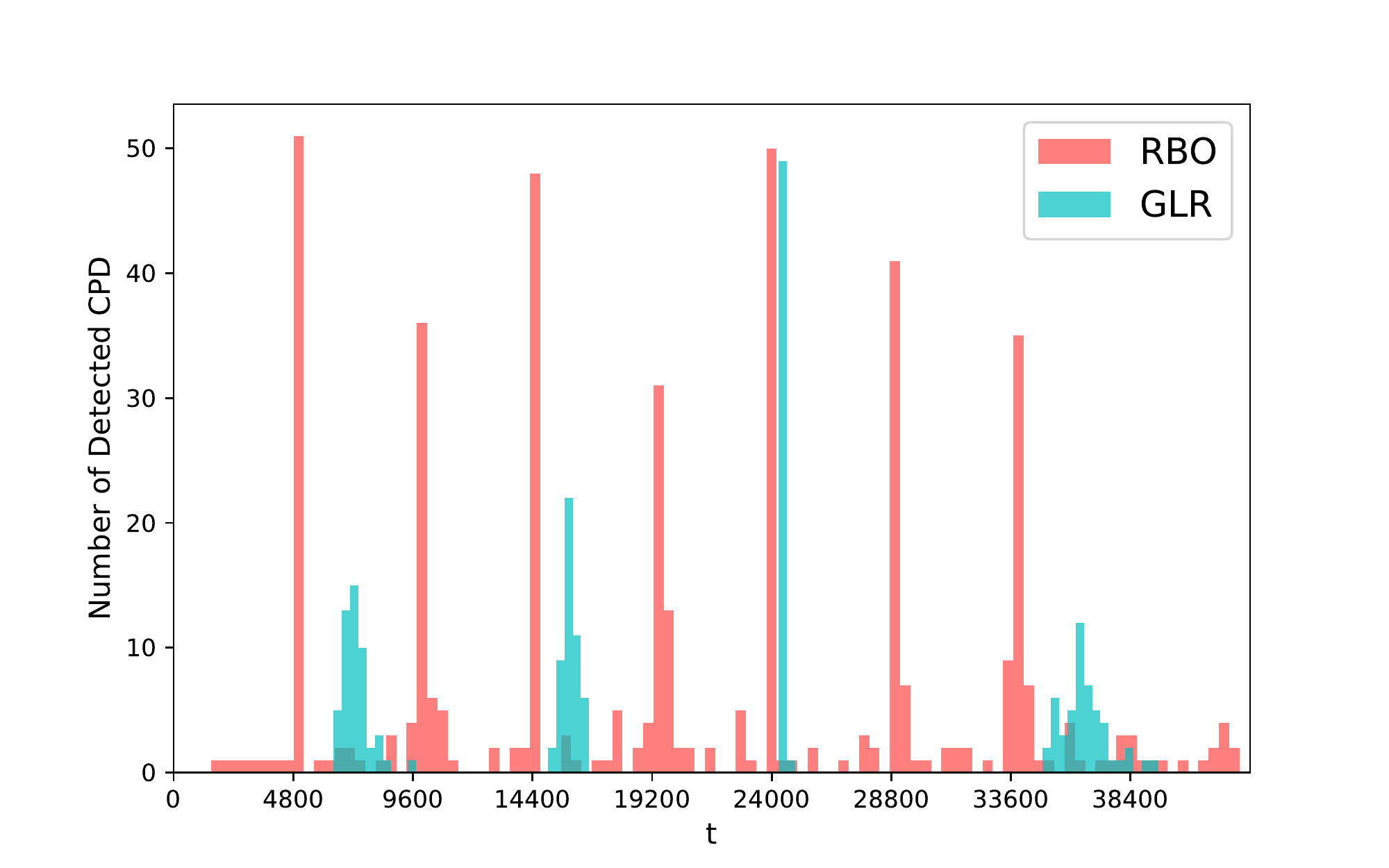}  
    \caption{Change points detected by RBO and GLR.}
    \label{fig:e2-nocoop}
\end{subfigure}
\caption{Change point detection of different algorithms on the Yahoo! Dataset.}
\label{fig:e2-cpd}
\end{figure}
\subsubsection{Digital Marketing Dataset} 
In \cite{cao2019nearly}, the authors preprocess the dataset. They show the corresponding arm reward distribution in Figure 5 of \cite{cao2019nearly}. Unlike the Yahoo! dataset, where the arms distribution is piecewise-Bernoulli, here, we only assume that the distribution is bounded within $[0,1]$. Similar to the previous experiment, we implement the dataset with a network with $K = 7$ agents as shown in \textbf{Figure~\ref{fig:e4-network}}. There are 12 piecewise-stationary segments as illustrated in \textbf{Figure~\ref{fig:e4-env}}. 
\begin{figure}[!ht]
\begin{subfigure}{.23\textwidth}
  \centering
  \includegraphics[width=.55\linewidth]{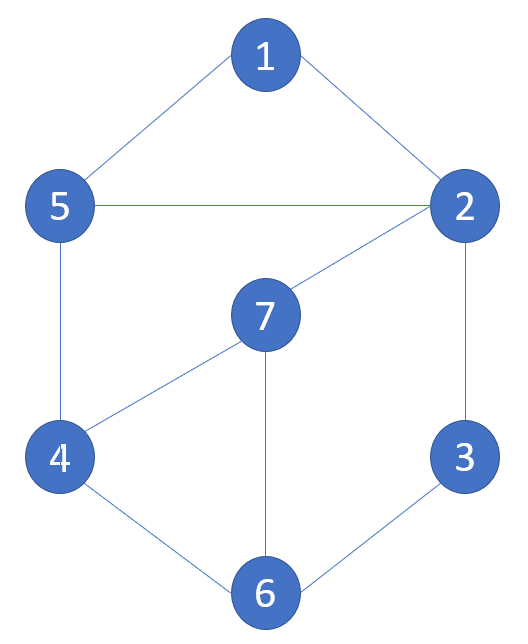}
  \caption{Observation network.}
  \label{fig:e4-network}
\end{subfigure}
\begin{subfigure}{.24\textwidth}
  \centering
  \vspace{-10pt}
  \includegraphics[width=.98\linewidth]{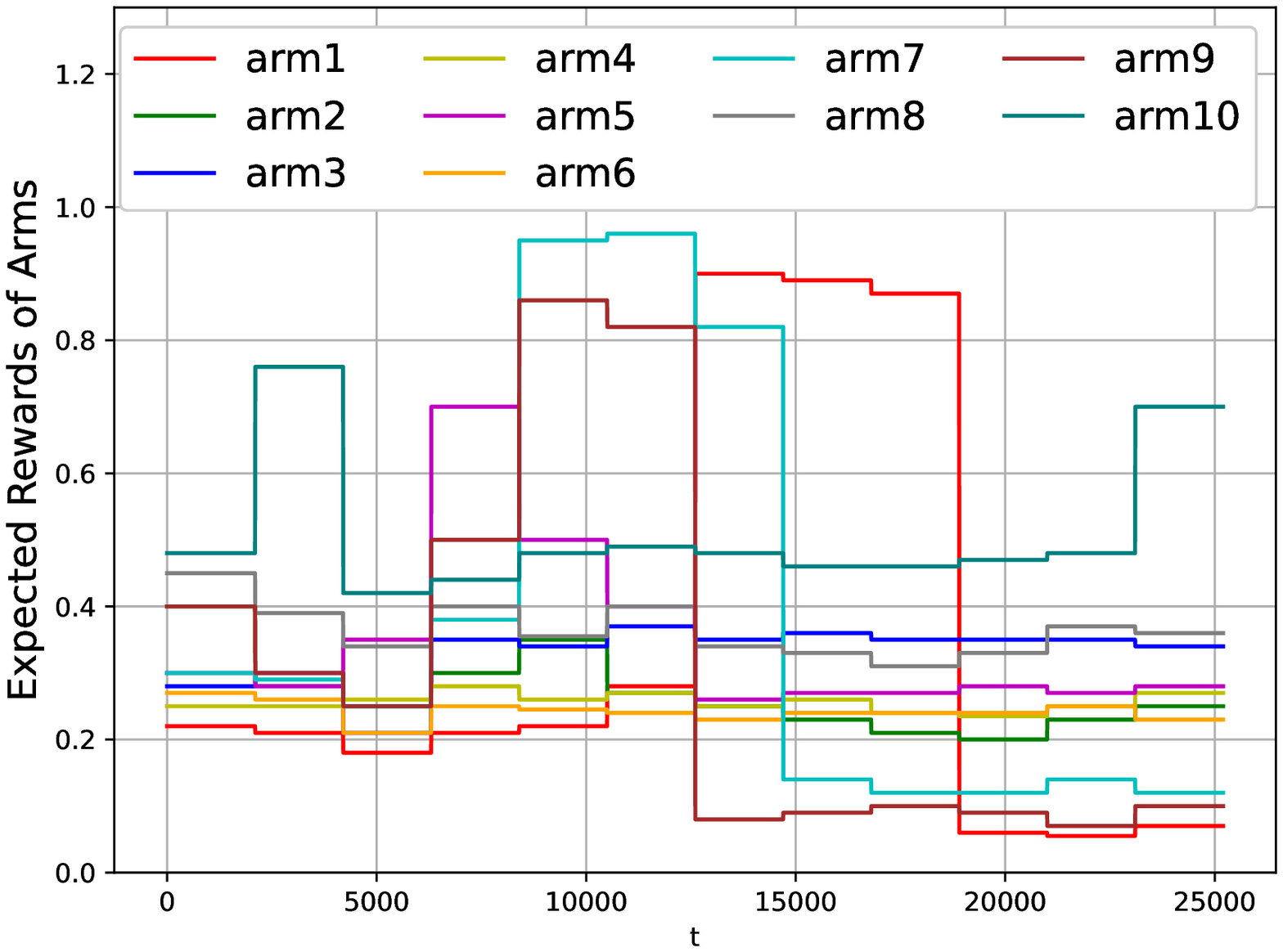}  
  \vspace{-15pt}
  \caption{Average rewards of arms.}
  \label{fig:e4-env}
\end{subfigure}
\caption{Setting of Experiment III (Digital Marketing dataset).}
\label{fig:e4}
\end{figure}

\textbf{Figure~\ref{fig:e4-rgt}} shows the average regret of different policies and \textbf{Figure~\ref{fig:e4-cpd}} demonstrates the performance of different change point detectors.
\begin{figure}[!htp]
  \centering
  \vspace{-15pt}
  \includegraphics[width=0.75\linewidth]{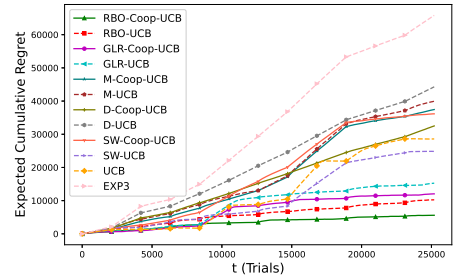}
  \caption{Expected cumulative regret for different algorithms using the Digital Marketing Dataset.}
  \label{fig:e4-rgt}
  \vspace{-15pt}
\end{figure} 
From \textbf{Figure~\ref{fig:e4-rgt}}, RBO-Coop-UCB has the lowest regret. \textbf{Figure~\ref{fig:e4-coop}} shows the detection performance. Accordingly, RBO-Coop performs better than GLR-Coop due to more sensitivity and lower delay.

According to the theoretical analysis in Remark~\ref{rem:comp-glr-rbo}, RBOCPD suffers a higher false alarm rate. In Experiment I, the regret of RBO-UCB is higher than GLR because of more frequent false alarms. In Experiment II, given a lower false alarm empirically according to Table~\ref{tab:cpd}, the reason for higher regret of RBO-UCB than GLR-UCB is restarting all of the arms, even if the optimal one remains unchanged. As shown in \textbf{Figure~\ref{fig:e4-cpd}} and Table~\ref{tab:cpd}, the false alarm of RBO in Experiment III is comparable to that of GLR, whereas its detection rate is much higher. Therefore, RBO has the second-best performance in this experiment. GLR does not perform at the expected level because it is not as robust as RBO when the gap in arms' rewards between change points is small. Finally, except for SW-UCB, the other algorithms with cooperation have lower regret than the version without, which is the same as the results that appear in \textbf{Figure~\ref{fig:e1-rgt}}.
\begin{figure}[!ht]
\begin{subfigure}{.23\textwidth}
  \centering
  \includegraphics[width=.9\linewidth]{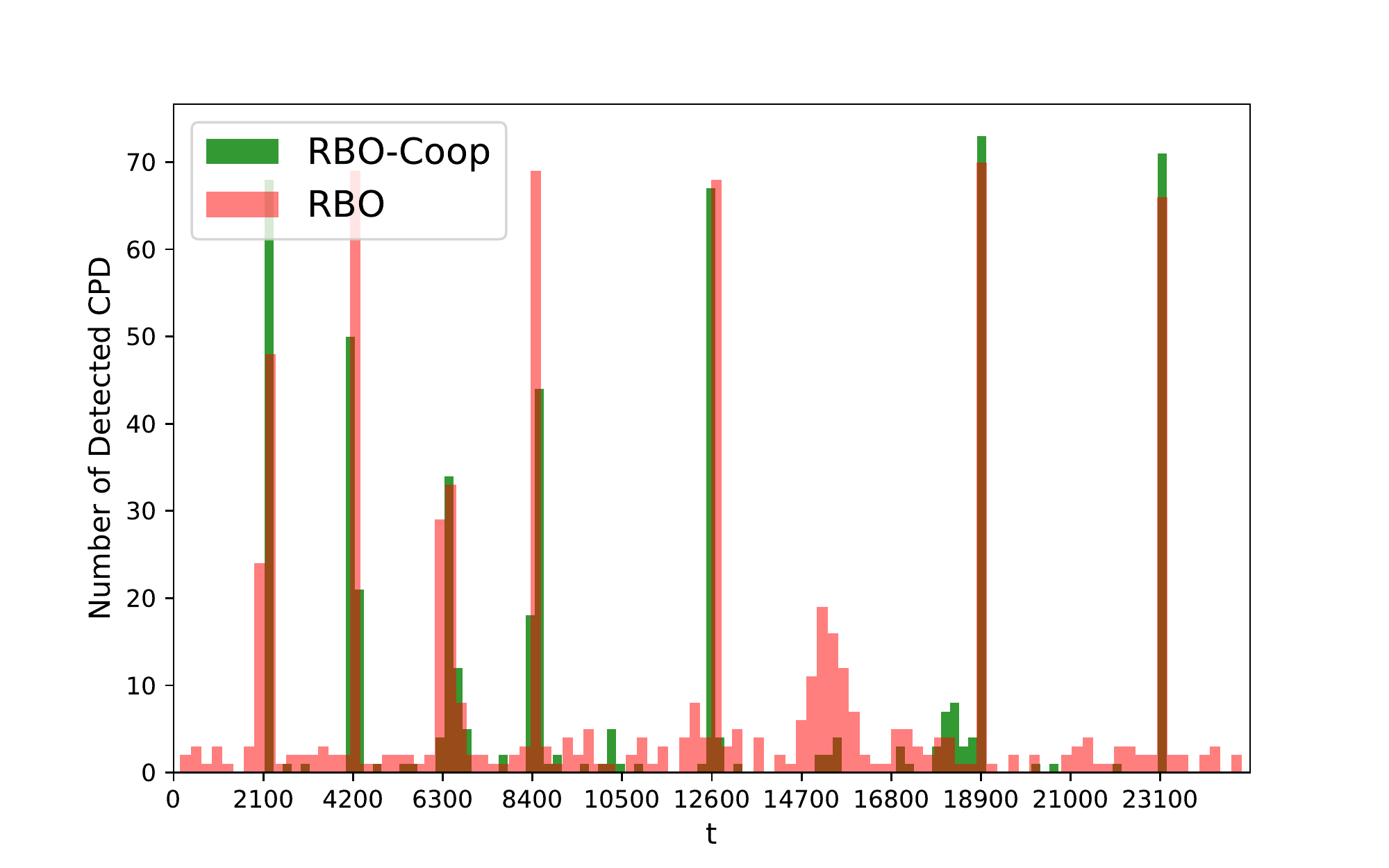}
  \caption{Change points detected by RBO-Coop and RBO.}
  \label{fig:e4-rbo}
\end{subfigure}
\begin{subfigure}{.23\textwidth}
  \centering
  \includegraphics[width=.9\linewidth]{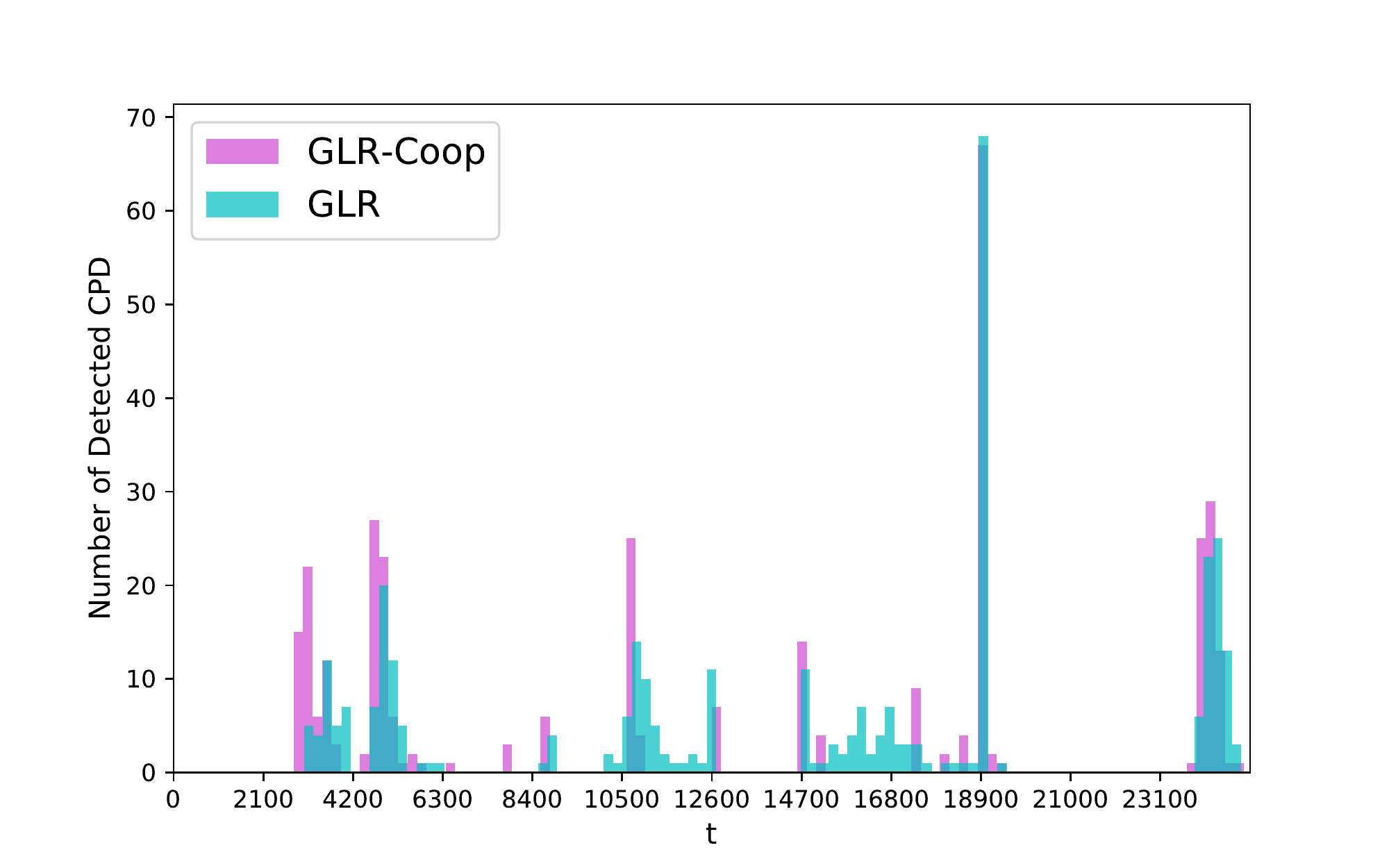}  
  \caption{Change points detected by GLR-Coop and GLR.}
  \label{fig:e4-glr}
\end{subfigure}
\newline
\begin{subfigure}{.23\textwidth}
  \centering
  \includegraphics[width=.9\linewidth]{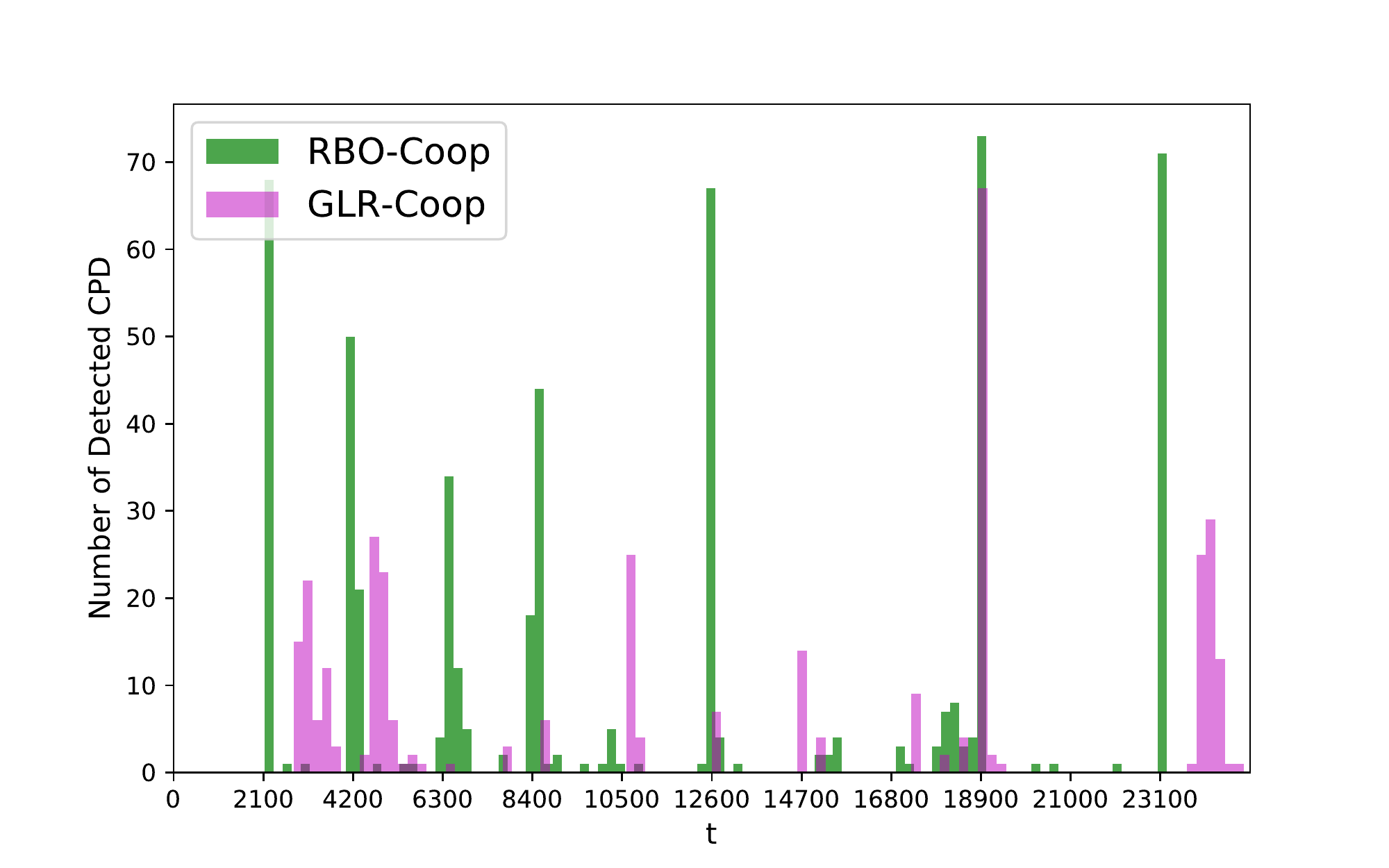}
  \caption{Change points detected by RBO-Coop and GLR-Coop.}
  \label{fig:e4-coop}
\end{subfigure}
\begin{subfigure}{.23\textwidth}
  \centering
  \includegraphics[width=.9\linewidth]{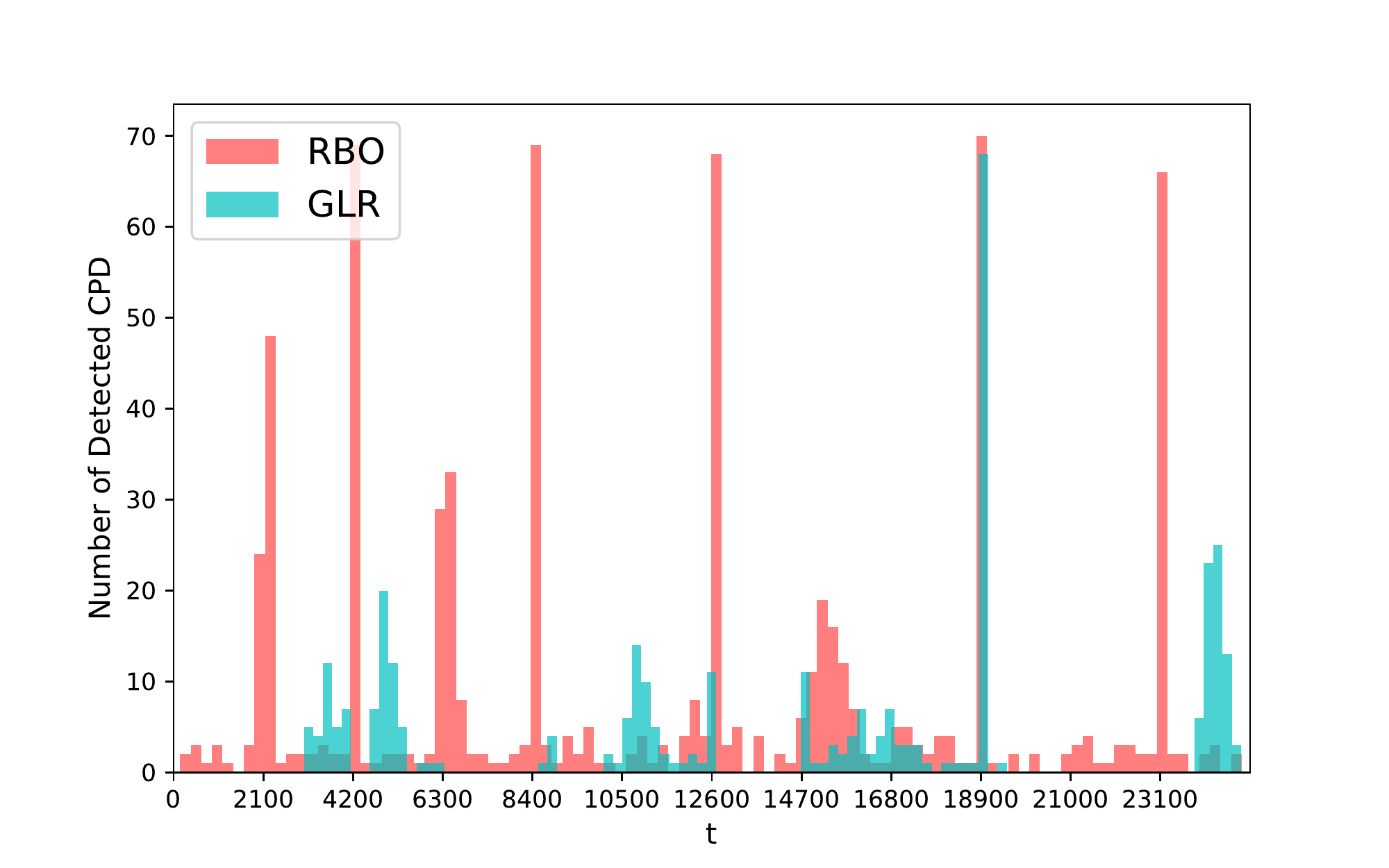}  
  \caption{Change points detected by GLR and RBO.}
  \label{fig:e4-ind}
\end{subfigure}
\caption{Change point detection of different algorithms on the  Digital Marketing Dataset.}
\label{fig:e4-cpd}
\end{figure}
\section{Conclusion} 
\label{sec:concl}
We propose a decision-making policy for the multi-agent multi-armed bandit problem in a piecewise-stationary environment. The process involves the seminal UCB framework combined with an information-sharing mechanism and a change point detector based on the Bayesian strategy. We prove an upper bound for the group regret of our proposal. Numerical analysis shows the superior performance of our method compared to the state-of-art research. Future research directions include improving the UCB strategy by personalizing it for each agent. For example, it is beneficial to consider heterogeneous agents' preferences. Besides, one might allow for different agent types and influence models and information-sharing under a directed graph model.
\appendix
\subsection{Performance Guarantees of RBOCPD} 
\label{app:rbocpd}
\begin{lemma}[False alarm rate]
\label{the:far}
Assume that $\boldsymbol{X}_{r:t} \sim \mathcal{B}(\mu)$. Let $\alpha > 1$. If $\eta_{r,s,t}$ is small enough such that \cite{alami2020restarted}
{\footnotesize
\begin{align}
   & \forall t \in [r,\nu_n), s \in (r,t]: \notag \\
   & \eta_{r,s,t} <    \frac{\sqrt{n_{r:s-1} \times n_{s:t}}}{10(n_{r:t}+1)} \times \notag \\
   & (\frac{\log 2 \log^4(\alpha)  \delta^2}{4n_{r:t} \log^2(n_{r:t}) \log(\alpha n_{r:s-1}) \log(n_{r:s-1}) \log(\alpha n_{s:t}) \log(n_{s:t})})^{\alpha} \notag 
\end{align}
}%
then with probability higher than $1-\delta$, no false alarm occurs in the interval $[r,\nu_n)$:
\begin{gather}
    \mathbb{P}_{\theta} \{\exists t \in [r,\nu_n),  \textbf{Restart}_{r:t} = 1\} \leq \delta. 
\end{gather}
\end{lemma}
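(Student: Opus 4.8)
The plan is to reduce the false-alarm event to a large-deviation event for the empirical means of the stationary Bernoulli data and then control it with a time-uniform concentration bound tuned to the hypothesized smallness of $\eta_{r,s,t}$. First I would make the restart criterion explicit: by \eqref{eq:res} a false alarm in $[r,\nu_n)$ occurs only if there exist $t\in[r,\nu_n)$ and a splitting point $s\in(r,t]$ with $\vartheta_{r,s,t}>\vartheta_{r,r,t}$. A union bound over the pairs $(s,t)$ therefore reduces everything to controlling $\mathbb{P}_{\theta}\{\vartheta_{r,s,t}>\vartheta_{r,r,t}\}$ for each fixed pair.

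Second, I would unroll the weight recursion \eqref{eq:theta}. For $s\le t$ the product telescopes to $\vartheta_{r,s,t}=\eta_{r,s,t}\,\mathcal{V}_{r,s-1}\exp(-\sum_{\tau=s+1}^{t} l_{s,\tau})$, and since the forecaster losses are cumulative log-losses of the Laplace predictor (Definition~\ref{def:lapp}), each factor $\exp(-\sum_\tau l_{s,\tau})$ equals the Laplace (add-one / Krichevsky--Trofimov) mixture probability of the segment, which for $a$ ones and $b$ zeros has the closed form $\frac{a!\,b!}{(a+b+1)!}$ by the sequential chain rule. Writing $\Phi(\boldsymbol{X}_{s:t})$ for the negative log of this mixture and taking logs, the event $\vartheta_{r,s,t}>\vartheta_{r,r,t}$ becomes
\[
\log\frac{\eta_{r,r,t}}{\eta_{r,s,t}} < \Phi(\boldsymbol{X}_{r:s-1})+\Phi(\boldsymbol{X}_{s:t})-\Phi(\boldsymbol{X}_{r:t}),
\]
and the right-hand side is, up to lower-order Laplace corrections, the penalized log-likelihood-ratio statistic $n_{r:s-1}\,\mathrm{KL}(\hat{\mu}_{r:s-1}\|\hat{\mu}_{r:t})+n_{s:t}\,\mathrm{KL}(\hat{\mu}_{s:t}\|\hat{\mu}_{r:t})$ comparing the one-segment model on $[r,t]$ against the change-at-$s$ model, where $\mathrm{KL}$ denotes the Bernoulli Kullback--Leibler divergence.

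Third --- and this is the heart of the argument --- under the stationary null $\boldsymbol{X}_{r:t}\sim\mathcal{B}(\mu)$ both empirical means $\hat{\mu}_{r:s-1}$ and $\hat{\mu}_{s:t}$ concentrate on $\mu$, so the combined divergence statistic is typically small; a false alarm forces it above the threshold $\log(1/\eta_{r,s,t})$. I would bound $\mathbb{P}_{\theta}\{\vartheta_{r,s,t}>\vartheta_{r,r,t}\}$ by a self-normalized / Laplace-method deviation inequality applied simultaneously to the prefix $[r,s-1]$ and the suffix $[s,t]$. The resulting tail is controlled by $\eta_{r,s,t}$ multiplied exactly by the logarithmic factors $\log n_{r:s-1}$, $\log n_{s:t}$, $\log^2 n_{r:t}$ and the power $\alpha$ that appear in the hypothesized bound on $\eta_{r,s,t}$; the numerator $\log 2\,\log^4(\alpha)\,\delta^2$ and the $\alpha$-power are engineered precisely so that after summing over the $\mathcal{O}(n_{r:t}^2)$ admissible pairs $(s,t)$ the double series remains below $\delta$.

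The main obstacle is exactly this third step: a naive union bound over all $t<\nu_n$ and all split points $s$ would diverge, so I would need a sharp time-uniform (anytime-valid) deviation bound obtained by peeling the segment lengths $n_{r:s-1}$ and $n_{s:t}$ over geometric grids and applying a self-normalized martingale inequality on each grid cell. Matching the peeling-induced logarithmic penalties to the explicit form of $\eta_{r,s,t}$ so that the per-pair failure probabilities telescope into a convergent geometric series summing to $\delta$ is the delicate part; by contrast, the telescoping of the weight recursion and the closed form of the Laplace mixture are routine algebra. Collecting the per-pair bounds and applying the union bound over $(s,t)$ then yields $\mathbb{P}_{\theta}\{\exists t\in[r,\nu_n),\ \textbf{Restart}_{r:t}=1\}\le\delta$.
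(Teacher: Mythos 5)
You should know at the outset that the paper itself contains no proof of Lemma~\ref{the:far}: it is imported verbatim from \cite{alami2020restarted}, so the only meaningful comparison is against that reference's argument. Measured that way, your skeleton is the right one and matches the source: reduce the restart event \eqref{eq:res} to per-pair events $\{\vartheta_{r,s,t}>\vartheta_{r,r,t}\}$ by a union bound; unroll the recursion \eqref{eq:theta} so that $\vartheta_{r,s,t}=\eta_{r,s,t}\,\mathcal{V}_{r,s-1}\exp(-\sum_{\tau=s+1}^{t}l_{s,\tau})$; identify the exponentials with Laplace/Krichevsky--Trofimov mixture likelihoods $\frac{a!\,b!}{(a+b+1)!}$; recognize the resulting comparison as a penalized Bernoulli GLR test against the threshold $\log(\eta_{r,r,t}/\eta_{r,s,t})$; and control it under the stationary null by a time-uniform concentration argument. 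Your instinct that the $\alpha$-dependence must come from peeling over a geometric grid is also consistent with the $\log(\alpha n_{r:s-1})$ and $\log(\alpha n_{s:t})$ factors in the hypothesis.

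Two genuine problems remain, one small and one decisive. The small one: your displayed reduction has the sign reversed. Writing $\Phi$ for the negative log-mixture, the event $\vartheta_{r,s,t}>\vartheta_{r,r,t}$ is equivalent to $\Phi(\boldsymbol{X}_{r:t})-\Phi(\boldsymbol{X}_{r:s-1})-\Phi(\boldsymbol{X}_{s:t})>\log\frac{\eta_{r,r,t}}{\eta_{r,s,t}}$, since the two-segment mixture assigns the \emph{larger} likelihood; it is this difference (not its negation, which is what you wrote) that Stirling's approximation turns into $n_{r:s-1}\mathrm{KL}(\hat{\mu}_{r:s-1}\Vert\hat{\mu}_{r:t})+n_{s:t}\mathrm{KL}(\hat{\mu}_{s:t}\Vert\hat{\mu}_{r:t})$ up to logarithmic corrections, so your formula contradicts your own verbal description. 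The decisive one: the entire quantitative content of the lemma is that the \emph{specific} threshold on $\eta_{r,s,t}$ --- the constant $10$, the exponent $\alpha$, the numerator $\log 2\,\log^4(\alpha)\,\delta^2$, and the particular product of logarithmic factors --- forces the union-bounded failure probability below $\delta$. Your proposal does not derive this; it asserts that these quantities are ``engineered precisely so that'' the series converges, which assumes exactly what must be proven. Closing that gap requires (i) two-sided Stirling bounds converting $\Phi$-differences into KL terms with explicit $\tfrac{1}{2}\log n$ corrections, (ii) a per-pair deviation inequality for the combined KL statistic under the null at level $\delta_{s,t}$, and (iii) an explicit allocation $\delta_{s,t}$ (decaying like $1/(n\log^2 n)$ along the peeling grid) whose sum over all admissible pairs is at most $\delta$ and whose inversion reproduces the stated bound on $\eta_{r,s,t}$. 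Until step (iii) is actually performed, what you have is an accurate plan of the proof in \cite{alami2020restarted}, not a proof.
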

\begin{definition}[Relative gap $\Delta_{r,s,t}$]
Let $\Delta \in [0,1]$. The relative gap $\Delta_{r,s,t}$ for the forecaster $s$ at time $t$ takes the following form (depending on the position of $s$) \cite{alami2020restarted}:
\begin{gather}
    \Delta_{r,s,t} = (\frac{n_{r:\nu_n-1}}{n_{r:s-1}} \mathbbm{1}\{\nu_n \leq s \leq t\} + \frac{n_{\nu_n:t}}{n_{s:t}}\mathbbm{1}\{s < \nu_n\})\Delta \notag 
\end{gather}
\end{definition}
\begin{lemma}[Detection delay]
\label{the:dd}
Let $\boldsymbol{x}_{r:\nu_n-1} \sim \mathcal{B}(\mu_1), \boldsymbol{x}_{\nu_n:t} \sim \mathcal{B}(\mu_2)$ and $f_{r,s,t} = \log n_{r:s}+ \log n_{s:t+1} - \frac{1}{2} \log n_{r:t} + \frac{9}{8}$. Also, $\Delta = |\mu_1 - \mu_2|$ is the change point gap. If $\eta_{r,s,t}$ is large enough such that \cite{alami2020restarted}
\begin{gather}
    \eta_{r,s,t} > \exp (-2n_{r,s-1}(\Delta_{r,s,t} - \mathcal{C}_{r,s,t,\delta})^2 + f_{r,s,t}),
\end{gather}
Then the change point $\nu_n$ is detected (with a probability at least $1-\delta$) with a delay not exceeding $\mathcal{D}_{\Delta,r,\nu_n}$ such that
\begin{align}
    \mathcal{D}_{\Delta,r,\nu_n} = \min &\{d \in \mathbb{N}^*: d > \frac{(1-\frac{\mathcal{C}_{r,\nu_n,d+\nu_n-1,\delta}}{\Delta})^{-2}}{2\Delta^2} \notag \\
    &\times \frac{-\log \eta_{r,\nu_n,d+\nu_n-1}+f_{r,\nu_n,d+\nu_n-1}}{1+\frac{\log \eta_{r,\nu_n,d+\nu_n-1}-f_{r,\nu_n,d+\nu_n-1}}{2n_{r,\nu_n-1}(\Delta-\mathcal{C}_{r,\nu_n,d+\nu_n-1,\delta})^2}}\}, \notag
\end{align}
where 
\begin{align}
    &\mathcal{C}_{r,s,t,\delta} = \frac{\sqrt{2}}{2}(\sqrt{\frac{1+\frac{1}{n_{r:s-1}}}{n_{r:s-1}}\log(\frac{2\sqrt{n_{r:s}}}{\delta})} \notag \\
    &\quad + \sqrt{\frac{1+\frac{1}{n_{s:t}}}{n_{s:t}}\log(\frac{2n_{r:t}\sqrt{n_{s:t}+1}\log^2(n_{r:t})}{\log(2)\delta})}) \label{eq:crst}.
\end{align}
\end{lemma}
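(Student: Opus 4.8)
The plan is to reduce the claim to a single weight comparison and then to a concentration bound on cumulative forecaster losses. Because the restart test \eqref{eq:res} fires as soon as \emph{some} forecaster $s\in(r,t]$ overtakes the reference forecaster $r$, it suffices to exhibit one such witness and bound the time at which it appears; the natural choice is the forecaster launched exactly at the change point, $s=\nu_n$, so I would show that $\vartheta_{r,\nu_n,t}>\vartheta_{r,r,t}$ holds within delay $d=t-\nu_n+1$. This witness is the right one because it observes only post-change data drawn from $\mathcal{B}(\mu_2)$, so its Laplace predictor (\textbf{Definition~\ref{def:lapp}}) tracks $\mu_2$, whereas the reference forecaster keeps predicting from data pooled across the change and is therefore biased toward $\mu_1$.

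First I would unfold the weight recursion \eqref{eq:theta} together with the initial weight $\mathcal{V}_{r,\nu_n-1}=\exp(-\hat{L}_{r:\nu_n-1})$ for both forecasters and take the log-ratio, which cancels the shared pre-change losses and leaves, schematically,
\begin{gather}
\log\frac{\vartheta_{r,\nu_n,t}}{\vartheta_{r,r,t}}
= \Bigl(\textstyle\sum_{i=\nu_n}^{t} l_{r,i}-\hat{L}_{\nu_n:t}\Bigr)
+\log\eta_{r,\nu_n,t}+(\text{lower-order }\eta\text{ terms}),\notag
\end{gather}
where the bracketed \emph{excess loss} is the price forecaster $r$ pays on the post-change block $[\nu_n,t]$ for using stale data, and detection within delay $d$ is equivalent to this log-ratio turning positive. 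Using the explicit Laplace loss \eqref{eq:fore-loss}, the expected per-step excess loss equals a Kullback--Leibler discrepancy between the pooled prediction and the true mean $\mu_2$, which for Bernoulli variables is bounded below by a term of order $(\Delta_{r,s,t})^2$ through Pinsker's inequality, with $\Delta_{r,s,t}$ the relative gap introduced above. Passing from expected to realized losses is the crux: a peeling scheme over the block length, combined with sub-Gaussian control of the empirical means $\hat{\mu}_{r:\nu_n-1}$ and $\hat{\mu}_{\nu_n:t}$, shows that with probability at least $1-\delta$ the empirical gap stays above $\Delta_{r,s,t}-\mathcal{C}_{r,\nu_n,t,\delta}$, where $\mathcal{C}_{r,\nu_n,t,\delta}$ is exactly the sum of the two Bernstein-type radii of \eqref{eq:crst}; the $\log^2(n_{r:t})$ and $\log 2$ factors there are precisely the union-bound cost of the peeling.

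Combining the two steps, the log-ratio becomes positive once the accumulated evidence $2\,n_{r:\nu_n-1}(\Delta_{r,s,t}-\mathcal{C}_{r,\nu_n,t,\delta})^2$ overtakes the structural penalty $-\log\eta_{r,\nu_n,t}+f_{r,\nu_n,t}$, and the lower-bound hypothesis on $\eta_{r,s,t}$ is exactly what keeps this penalty finite and forces the crossing. Solving this inequality for the smallest admissible block length --- while tracking how $\mathcal{C}_{r,\nu_n,t,\delta}$, $f_{r,\nu_n,t}$, and the relative-gap normalisation $n_{r:\nu_n-1}/n_{r:s-1}$ all depend on $d=t-\nu_n+1$ --- reproduces the minimizer defining $\mathcal{D}_{\Delta,r,\nu_n}$, where the effective two-sample size $n_{r:\nu_n-1}n_{\nu_n:t}/n_{r:t}$ is what the relative gap absorbs. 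I expect the main obstacle to be the concentration step: engineering a time-uniform confidence width carrying the precise logarithmic factors of \eqref{eq:crst} requires carefully accounting for the $(+1,+2)$ smoothing bias of the Laplace predictor and a peeling scheme delicate enough that the union-bound cost does not inflate the leading $1/(2\Delta^2)$ order of the delay.
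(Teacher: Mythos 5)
There is nothing in the paper to compare your argument against: Lemma~\ref{the:dd} is stated without proof and is imported, citation and all, from \cite{alami2020restarted} --- it is the detection-delay guarantee of the RBOCPD paper, quoted in the appendix as an external ingredient alongside the false-alarm bound of Lemma~\ref{the:far}. Measured against the proof in that source, your outline follows essentially the same route: take the forecaster launched at $s=\nu_n$ as the witness for the restart test \eqref{eq:res}; unroll the recursion \eqref{eq:theta} so that the pre-change losses absorbed in the initial weight $\mathcal{V}_{r,\nu_n-1}$ cancel in the log-ratio $\log(\vartheta_{r,\nu_n,t}/\vartheta_{r,r,t})$; lower-bound the excess loss of the stale, pooled forecaster by a Pinsker-type term of order $\Delta_{r,s,t}^2$; replace expected by realized losses via a time-uniform concentration bound whose radii are those of \eqref{eq:crst}; and solve the resulting inequality for the smallest admissible $d$. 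You also correctly identify the structural features that make the final expression look the way it does, in particular the effective two-sample size $n_{r:\nu_n-1}n_{\nu_n:t}/n_{r:t}$, which is exactly what emerges when the defining inequality of $\mathcal{D}_{\Delta,r,\nu_n}$ is rearranged. The one caveat: what you have is a proof plan, not a proof. The entire quantitative content of the lemma --- the $\frac{9}{8}$ in $f_{r,s,t}$, the exact logarithmic factors in $\mathcal{C}_{r,s,t,\delta}$, and hence the precise form of the delay --- lives in the concentration step you explicitly defer as ``the crux,'' so as a standalone verification it is incomplete; but since the paper itself proves nothing here and simply leans on \cite{alami2020restarted}, your reconstruction matches the intended argument as closely as the available text permits.
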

\begin{definition}[Detectable change points]
\label{def:dcp}
A change point $\nu_n$ is $(\varepsilon,d)$-detectable (with probability $1-\delta$) with respect to the sequence $(\nu_n)_n$ for the delay function $\mathcal{D}$ if \cite{maillard2019sequential}
\begin{gather}
    \mathcal{D}_{\Delta,(\nu_{n-1}+1+\varepsilon l_{n-1}),\nu_n} < l_n,
\end{gather}
where $l_{n-1} = \nu_n -\nu_{n-1}$. That is, using only a fraction $1-\varepsilon$ of the $l_{n-1}$-many observations available between the previous change point and $\nu_n + 1$, the change can be detected at a time not exceeding the next change point.
\end{definition}
%
\begin{remark}\label{rem:comp-glr-rbo}
Compared to other change point detection methods used in MAB algorithms, RBOCPD has the same advantages as GLRCPD, such as few hyperparameters (in RBOCPD, only one hyperparameter $\eta_{r,s,t}$). Besides, it is robust to the lack of prior knowledge. Based on the theoretical analysis in Appendix E, F of \cite{besson2019generalized} and our work, RBOCPD has lower computational complexity and higher robustness against small gap cases \cite{alami2020restarted} while slightly higher false alarm rate than GLRCPD given the same hyperparameter (In GLRCPD the false alarm rate $\delta$ can be directly selected as hyperparameter while according to Lemma~\ref{the:far}, in RBOCPD the false alarm rate $\delta \sim o(\eta_{r,s,t}^{\frac{1}{2\alpha}})$). 
\end{remark}
\subsection{Proof of Lemma~\ref{lem:rgt-sta}} 
\label{app:rgt-sta}
First, we investigate the improvement in the false alarm as a result of cooperation in making the restart decision in each agent's neighborhood, as the following Lemma~\ref{lem:cfar} states.
\begin{lemma}[False alarm rate under cooperation]
\label{lem:cfar}
Define $\sigma$ as the cooperative false alarm rate where each agent implements a change point detector. The false alarm rate in a information-sharing mechanism can be concluded as
\begin{align}
    & P(\exists t \in [r,\tau_c): \sum_{j \in \mathcal{N}_k} \mathbbm{1}\{\exists i \in [N_{t-d}^{j,m}, N_t^{j,m}], r_i^{j,m} > 0\}  \geq \lceil \frac{\eta_k}{2} \rceil) \notag \\
    &= \sum_{j=0}^{\lceil \frac{\eta_k}{2} \rceil}\binom{\eta_k}{j}\delta^{\eta_k - j}(1-\delta)^j = \sigma,
\end{align}
where $\delta$ is the false alarm rate of every single agent, and $\eta_k$ is the number of neighbors of agent $k$ (degrees of node $k$ in graph $\mathcal{G}$).
\end{lemma}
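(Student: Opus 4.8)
The plan is to collapse the cooperative restart event of Definition~\ref{def:coop-res} into a single binomial tail. First I would introduce, for each neighbor $j \in \mathcal{N}_k$, the indicator
\begin{gather}
B_j = \mathbbm{1}\{\exists t \in [r,\tau_c), \ \exists i \in [N_{t-d}^{j,m}, N_t^{j,m}] : r_i^{j,m} > 0\}, \notag
\end{gather}
which records whether neighbor $j$ emits at least one (false) restart signal on arm $m$ anywhere in the stationary interval. Since the whole interval $[r,\tau_c)$ lies inside one stationary segment, Lemma~\ref{the:far} applied to agent $j$'s detector gives $P(B_j = 1) \le \delta$: the existential over $t$ and over the memory window is exactly what the per-agent false-alarm guarantee $\delta$ already bounds, so no extra union bound over time is needed.

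Next I would treat $\{B_j\}_{j\in\mathcal{N}_k}$ as independent Bernoulli($\delta$) variables, invoking the standing assumption that rewards are independent across agents and over time. The cooperative detector fires precisely when the majority-voting threshold is met, i.e.\ $\sum_{j\in\mathcal{N}_k} B_j \ge \lceil \eta_k/2\rceil$, so the count $\sum_j B_j$ is $\mathrm{Binomial}(\eta_k,\delta)$ and
\begin{gather}
\sigma = P\Big(\sum_{j\in\mathcal{N}_k} B_j \ge \lceil \tfrac{\eta_k}{2}\rceil\Big) = \sum_{i=\lceil \eta_k/2\rceil}^{\eta_k}\binom{\eta_k}{i}\delta^{i}(1-\delta)^{\eta_k - i}. \notag
\end{gather}
Re-indexing by the number of silent neighbors $j = \eta_k - i$ and using $\binom{\eta_k}{i}=\binom{\eta_k}{\eta_k-i}$ rewrites this upper tail in the summation form claimed in the lemma. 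Reading off the lowest power of $\delta$ then shows $\sigma = \Theta(\delta^{\lceil \eta_k/2\rceil})$, so in the regime $\delta = 1/T$ the cooperative rate obeys $\sigma \ll \delta$ as soon as $\eta_k \ge 3$; this is the multiplicative reduction subsequently exploited in Lemma~\ref{lem:rgt-sta} and Theorem~\ref{the:rgt}.

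I expect the independence step to be the only real obstacle. Strictly, the detectors of two neighbors $j_1,j_2$ of $k$ consume the overlapping observation collections $\mathcal{X}^{j_1,m}$ and $\mathcal{X}^{j_2,m}$ whenever $j_1$ and $j_2$ share a common neighbor, so $r^{j_1,m}$ and $r^{j_2,m}$ are correlated and the count is not exactly binomial. I would resolve this either by (i) modelling the per-agent alarms as independent---an approximation consistent with the independence of the underlying reward streams---which yields the exact closed form stated, or (ii) retaining only the inequality $P(B_j=1)\le\delta$ and bounding the tail from above, which still delivers the $\sigma = O(\delta^{\lceil\eta_k/2\rceil})$ reduction required downstream. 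Route (i) matches the equality asserted in the lemma and is the one I would present, flagging the correlation from shared observations as the place where rigor is traded for the clean expression.
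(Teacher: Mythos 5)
Your proposal follows essentially the same route as the paper's proof: treat each neighbor's false alarm as an independent Bernoulli($\delta$) event, express the majority-vote trigger as a binomial upper tail, re-index via $\binom{\eta_k}{i}=\binom{\eta_k}{\eta_k-i}$ to obtain the stated sum, and conclude $\sigma<\delta$ in the small-$\delta$ regime. The only substantive difference is that you explicitly flag the correlation among neighbors' detectors caused by overlapping observation collections, a point the paper's proof passes over silently by asserting the binomial form outright, so your write-up is if anything the more careful of the two.
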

\begin{proof}
For each agent, the false alarm rate is $\delta$. Therefore, when cooperating, the probability that there are $j$ agents having false alarms at the same time is $\binom{\eta_k}{j}\delta^j(1-\delta)^{\eta_k - j}$. Based on the information-sharing mechanism, the algorithm has a false alarm only when more than half of the neighbor agents report a false alarm. Therefore, considering all possible scenarios, the false alarm rate yields $\sigma = \sum_{j=\lceil \frac{\eta_k}{2} \rceil}^{\eta_k}\binom{\eta_k}{j}\delta^j(1-\delta)^{\eta_k - j}=\sum_{j=0}^{\lceil \frac{\eta_k}{2} \rceil}\binom{\eta_k}{j}\delta^{\eta_k - j}(1-\delta)^j$. Besides, a $\delta << 1$ satisfies $\sum_{j=0}^{\lceil \frac{\eta_k}{2} \rceil}\binom{\eta_k}{j}\delta^{\eta_k - j-1}(1-\delta)^j \leq 1$, $\sigma < \delta$, which implies that the cooperation in change point detectors reduces the false alarm rate.  Different number of neighbors $\eta_k$ implies different value of $\sigma$ and the higher $\eta_k$ is, the smaller $\sigma$ is. 
\end{proof}
Now we are in the position to prove Lemma~\ref{lem:rgt-sta}.\\
The expected regret can be written as
\begin{gather}
    \mathcal{R}_T^k \leq \Delta_1^* \sum_{m \in \mathcal{M}}N_T^{k,m},
\end{gather}
where $\Delta_1^*$ is the expected largest reward difference. Besides, $\sum_{m\in \mathcal{M}} N_T^{k,m} = \sum_{m\in \mathcal{M}} N_T^{k,m} \mathbbm{1}\{\tau_1 \leq T\} + \sum_{m\in \mathcal{M}} N_T^{k,m} \mathbbm{1}\{\tau_1 > T\}$ and we have,
\begin{align}
     \sum_{m\in \mathcal{M}} N_T^{k,m} \mathbbm{1}\{\tau_1 > T\} &\leq  p T + \sum_{m\in\mathcal{M}} \sum_{t=1}^T \mathbbm{1}\{I_t^k \neq i^*_t, N_T^{k,m} < l\} \notag \\
    &\quad + \sum_{m\in\mathcal{M}} \sum_{t=1}^T \mathbbm{1}\{I_t^k \neq i^*_t, N_T^{k,m} > l\}, \notag 
\end{align}
where $pT$ refers to the regret of the forced exploration. Considering the worst situation, where all $l$ steps are not optimal, $\sum_{m\in\mathcal{M}} \sum_{t=1}^T \mathbbm{1}\{I_t^k \neq i^*_t, N_T^{k,m} > l\} \leq Ml$.  Besides, $\{I_t^k \neq i^*_t, N_T^{k,m} > l\} \subseteq \{\bar{X}_t^{k,I_t^k} \geq \mu_t^{k,I_t^k} + C_t^{k,I_t^k}\} \cup  \{\bar{X}_t^{k,i^*_t} \geq \mu_t^{k,i^*_t} - C_t^{k,i^*_t}\} \cup  \{\mu_t^{k,i^*_t}-\mu_t^{k,I_t^k} \leq 2C_t^{k,I_t^k}, N_T^{k,m} > l \} $ \cite{auer2002finite}. Denote $\Delta_1^{\min}$ as the expected lowest reward difference, then for $l = \lceil \frac{8 \xi \log T}{ (\Delta_1^{\min})^2} \rceil$, $\{\mu_t^{k,i^*_t} -\mu_t^{k,I_t^k} \leq 2C_t^{k,I_t^k}, N_T^{k,m} > l \} = \emptyset $, because $\mu_t^{k,i^*_t} -\mu_t^{k,I_t^k}  - 2C_t^{k,I_t^k} = \mu_{i^*_t,t} -\mu_{I_t,t} - 2\sqrt{\frac{\xi (\alpha+1)\log t}{N_t^{k,I_t^k}}} \geq \Delta_1^{\min} - 2\sqrt{\frac{\xi (\alpha+1)\log t}{\frac{8 \xi \log T}{ (\Delta_1^{\min})^2}}} \geq \Delta_1^{\min} (1 - \sqrt{\frac{(\alpha+1)\log t}{2\log T}}) \geq 0$, $\forall I_t^k \in \mathcal{M}$,.
\begin{fact}[Chernoff-Hoeffding bound]
\label{fact:chb}
Let $X_1,X_2, \ldots, X_n$ be random variables with common range $[0,1]$ and such that $\mathbb{E}[X_t|X_1, \ldots, X_{t-1}] = \mu$. Let $S_n = X_1+X_2+ \ldots+X_n$, then for all $a \geq 0$,
$P(S_n \geq n\mu+a) \leq e^{-2a^2/n}$ and $P(S_n \leq n\mu-a) \leq e^{-2a^2/n}$.
\end{fact}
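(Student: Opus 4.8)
The plan is to prove both tail inequalities by the exponential-moment (Chernoff) method, using only the conditional-mean hypothesis $\mathbb{E}[X_t \mid X_1,\dots,X_{t-1}] = \mu$ rather than full independence. First I would reduce to a single tail: the lower-tail bound $P(S_n \le n\mu - a) \le e^{-2a^2/n}$ follows from the upper-tail bound applied to the variables $1 - X_t$, which again lie in $[0,1]$ and satisfy $\mathbb{E}[1 - X_t \mid X_1,\dots,X_{t-1}] = 1-\mu$. Hence it suffices to treat $P(S_n \ge n\mu + a)$.

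For the upper tail, fix $s > 0$ and apply Markov's inequality to $e^{s S_n}$:
\[
P(S_n \ge n\mu + a) = P\!\left(e^{s(S_n - n\mu)} \ge e^{sa}\right) \le e^{-sa}\,\mathbb{E}\!\left[e^{s(S_n - n\mu)}\right].
\]
The next step is to control the exponential moment despite the lack of independence. Writing $\mathcal{F}_{t} = \sigma(X_1,\dots,X_t)$, I would peel off the last summand by the tower property, so that $\mathbb{E}[e^{s(S_n - n\mu)}] = \mathbb{E}\big[e^{s(S_{n-1}-(n-1)\mu)}\,\mathbb{E}[e^{s(X_n - \mu)} \mid \mathcal{F}_{n-1}]\big]$. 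Conditionally on $\mathcal{F}_{n-1}$, the variable $X_n$ takes values in $[0,1]$ and has conditional mean $\mu$, so Hoeffding's lemma gives the pointwise bound $\mathbb{E}[e^{s(X_n-\mu)}\mid\mathcal{F}_{n-1}] \le e^{s^2/8}$. Iterating this estimate $n$ times yields $\mathbb{E}[e^{s(S_n-n\mu)}] \le e^{n s^2/8}$, whence $P(S_n \ge n\mu + a) \le \exp(ns^2/8 - sa)$. Minimizing the exponent over the free parameter by setting $s = 4a/n$ makes it equal to $-2a^2/n$, which is exactly the claimed bound.

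The one genuinely nontrivial ingredient, and the step I would treat most carefully, is Hoeffding's lemma: a random variable $Y \in [0,1]$ with mean $\mu$ satisfies $\mathbb{E}[e^{s(Y-\mu)}] \le e^{s^2/8}$. I would establish it by analyzing $\psi(s) = \log \mathbb{E}[e^{s(Y-\mu)}]$, checking that $\psi(0)=\psi'(0)=0$ and that $\psi''(s)$ is the variance of $Y$ under the exponentially tilted law, which is supported in $[0,1]$ and therefore has variance at most $1/4$; a second-order Taylor expansion then gives $\psi(s) \le s^2/8$. In the martingale setting above the identical argument applies to the conditional law of $X_n$ given $\mathcal{F}_{n-1}$, and this is the only place where the predictable structure (rather than independence) is needed. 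Since every constant produced is dimension-free, no further care is required in passing to general $n$, and the symmetric argument for the lower tail finishes the proof.
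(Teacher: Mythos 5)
Your proof is correct. Note, however, that the paper itself offers no proof of this statement: it is quoted as a classical \textbf{Fact} (it is Fact 1 of Auer, Cesa-Bianchi, and Fischer's finite-time UCB analysis, going back to Hoeffding and Azuma) and is used as a black box in the proof of Lemma~1, so there is no in-paper argument to compare against. Your argument is the standard one and supplies exactly what the paper leaves implicit: reduction of the lower tail to the upper tail via $X_t \mapsto 1-X_t$, the Chernoff exponential-moment bound, and the peeling step $\mathbb{E}\bigl[e^{s(S_n-n\mu)}\bigr] = \mathbb{E}\bigl[e^{s(S_{n-1}-(n-1)\mu)}\,\mathbb{E}[e^{s(X_n-\mu)}\mid \mathcal{F}_{n-1}]\bigr] \le e^{s^2/8}\,\mathbb{E}\bigl[e^{s(S_{n-1}-(n-1)\mu)}\bigr]$, followed by optimizing $s=4a/n$. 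You correctly identified the one point where care is genuinely needed: the hypothesis is only that the \emph{conditional} mean is $\mu$ (not independence), so Hoeffding's lemma must be applied to the conditional law of $X_n$ given $\mathcal{F}_{n-1}$, which is the Azuma-type form of the argument rather than the i.i.d.\ one; your tilted-measure proof of the lemma, with the variance bound $1/4$ for distributions supported on $[0,1]$, is also sound. The numerical bookkeeping checks out: the exponent $ns^2/8 - sa$ at $s = 4a/n$ equals $-2a^2/n$, matching the stated constant.
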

According to Fact~\ref{fact:chb}, $P(\bar{X}_t^{k,I_t^k} \geq \mu_t^{k,I_t} + C_t^{k,I_t^k}) \leq e^{-2\xi(\alpha+1)\log t} = t^{-2\xi(\alpha+1)}$, and $P(\bar{X}_t^{k,i^*_t} \geq \mu_t^{k,i^*_t} - C_t^{k,i^*_t}) \leq t^{-2\xi(\alpha+1)}$. Choose $\xi > \frac{2}{\alpha+1}$, $t^{-2\xi(\alpha+1)} \leq t^{-4}$. Therefore, 
\begin{align}
    \sum_{m\in \mathcal{M}} N_T^{k,m} &\leq TP(\tau_1 \leq T) + pT + M(\lceil \frac{8 \xi \log T}{(\Delta_1^{\min})^2} \rceil + 1+\frac{\pi^2}{3}), \notag \\
    & \leq M\sigma T + pT + M(\lceil \frac{8 \xi \log T}{(\Delta_1^{\min})^2} \rceil + 1+\frac{\pi^2}{3}), \notag 
\end{align}
and the expected regret follows as
\begin{align}
    \mathcal{R}_T^k & \leq \Delta_1^*[M\sigma T + pT + M\lceil \frac{8 \xi\log T}{(\Delta_1^{\min})^2} \rceil + M(1+\frac{\pi^2}{3})], \notag \\
    &= \Delta_1^*M\sigma T + \Delta_1^*pT + \tilde{C}_1^k,
\end{align}
where $\tilde{C}_1^k = \Delta_1^*[M\lceil \frac{8 \xi\log T}{(\Delta_1^{\min})^2} \rceil + M(1+\frac{\pi^2}{3})]$. The first term is due to the false alarm probability in RBOCPD, the second term is because of the uniform exploration, and the last term is caused by UCB exploration. 
\subsection{Proof of Lemma~\ref{lem:fap-sc}} 
\label{app:fap-sc}
\begin{proof}
In the stationary scenario, based on the theoretical false alarm rate of RBOCPD stated by Theorem~\ref{the:far} and Lemma~\ref{lem:cfar}, $P(\tau_1^{k,m} \leq \tau) \leq \sigma$ for every $k \in \mathcal{K}$ and every $m \in \mathcal{M}$. Therefore, $P(\tau_1^k \leq T) \leq \sum_{m=1}^M P(\tau_1^{k,m} \leq \tau) \leq  M\sigma$. 
\end{proof}
\subsection{Proof of Lemma~\ref{lem:p-delay}} 
\label{app:p-delay}
\begin{proof}
(a) is from Lemma~\ref{lem:fap-sc}.\\
According to Fact~\ref{fact:gap} and Definition~\ref{def:dcp}, between change points $\nu_n$ and $\nu_{n-1}$, each arm has been selected at least $2 \frac{p}{M}\times (\frac{M}{p}\mathcal{D}_{\Delta,(\nu_{n-1}+d_{n-1}^{k,m}),\nu_n}+ \frac{M}{p}) = 2\mathcal{D}_{\Delta,(\nu_{n-1}+d_{n-1}^{k,m}),\nu_n} + 2$ times. Given good event $\mathcal{C}_{n-1}^k$, we have $\tau_{n-1} \leq \nu_{n-1} + d_{n-1}^k$. Besides, $\mathcal{D}_{\Delta,(\nu_{n-1}+d_{n-1}^{k,m}),\nu_n}$ is detectable with probability $1 - \delta$ based on Definition~\ref{def:dcp} within length $\nu_n - \tau_{n-1}$ because $\nu_n - \tau_{n-1} \geq \nu_n - (\nu_{n-1}+d_{n-1}) \geq 2 \max(d_n,d_{n-1}) - d_{n-1} \geq \max(d_n,d_{n-1})$, where the first inequality results from the good event $\mathcal{C}_{n-1}^k$. Therefore, one can conclude that $P(\tau_n^k \leq \nu_n + d_n^k | \mathcal{C}_{(n-1)}^k) \geq 1 - \delta$, which is equivalent to $P(\tau_n^k > \nu_n + d_n^k | \mathcal{C}_{(n-1)}^k) \leq  \delta$. That completes the proof.
\end{proof}
\subsection{Proof of Theorem~\ref{the:rgt}} 
\label{app:the-rgt}
The regret in the piecewise-stationary environment is a collection of regrets of good events and bad events. The regret of good events is because of UCB exploration, whereas that of bad events results from false alarms and long detection delays. The derivation of regret upper bound is similar to Lemma~\ref{lem:rgt-sta},  as the regret of bad events can be bounded using the theoretical guarantee of the RBOCPD described in Appendix~\ref{app:rbocpd}. Before proceeding to the detailed proof, we state the following fact:
\begin{fact}
\label{fact:gap}
For every pair of instants $s \leq t \in \mathbb{N}^*$ between two restarts on arm $m$, it holds $n_t^{k,m} - n_s^{k,m} \geq 2 \lfloor \frac{p}{M}(t-s) \rfloor$. 
\end{fact}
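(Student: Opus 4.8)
The plan is to reduce the claim to a counting argument over the forced-exploration steps alone, since these already guarantee a minimum sampling rate for every arm regardless of the UCB decisions. Write $L = \lfloor \frac{M}{p}\rfloor$ for the length of one forced-exploration cycle. By the forced-exploration rule \eqref{eq:f-expl}, an agent $j$ that has not restarted throughout $[s,t]$ keeps $\tau^j$ fixed and plays arm $m$ exactly at those instants $i$ with $(i-\tau^j)\bmod L = m$; that is, the forced plays of arm $m$ by agent $j$ form a single residue class modulo $L$. First I would record that $L \geq M$ whenever $p \leq 1$, so that every residue in $\mathcal{M}=\{1,\dots,M\}$ triggers a distinct arm and the cycle visits arm $m$ exactly once.

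The core step is a pigeonhole count. Among any $t-s$ consecutive integers, each residue class modulo $L$ is represented at least $\lfloor \frac{t-s}{L}\rfloor$ times; applying this to the class that triggers arm $m$ shows that agent $j$ forcibly plays arm $m$ at least $\lfloor \frac{t-s}{L}\rfloor$ times in the window $(s,t]$, independently of the phase $\tau^j$. Since $L = \lfloor \frac{M}{p}\rfloor \leq \frac{M}{p}$, monotonicity of the floor gives $\lfloor \frac{t-s}{L}\rfloor \geq \lfloor \frac{p}{M}(t-s)\rfloor$, so each non-restarting agent contributes at least $\lfloor \frac{p}{M}(t-s)\rfloor$ observations of arm $m$ to the window.

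Finally I would invoke the cooperative nature of the observation count: $n_t^{k,m}$ aggregates the plays of arm $m$ by agent $k$ together with those of its neighbors $j$ with $e(k,j)\in\mathcal{E}$. Because the graph is connected with $K\geq 2$, agent $k$ has degree $\eta_k \geq 1$, so the window receives the forced-exploration contributions of at least two agents, namely agent $k$ itself and one neighbor, each at least $\lfloor \frac{p}{M}(t-s)\rfloor$ by the previous step. Summing these two contributions yields $n_t^{k,m}-n_s^{k,m} \geq 2\lfloor \frac{p}{M}(t-s)\rfloor$, as claimed; discarding the UCB plays and the remaining neighbors only strengthens the inequality.

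The main obstacle is the phrase ``between two restarts'': the count for a neighbor $j$ requires that $\tau^j$ stay fixed across $(s,t]$, i.e. that $j$ not restart inside the window. I would discharge this by restricting to the good event of Lemma~\ref{lem:p-delay}, on which all restarts within a neighborhood are synchronized up to the bounded detection-delay spread, so that a window free of restarts for agent $k$ is, after absorbing an $O(1)$ boundary cycle, also restart-free for its neighbors; the $O(1)$ slack perturbs only the two boundary cycles and is dominated by the floor. Care at the window endpoints, where a partial cycle can cost one play per agent, is likewise exactly what the floors $\lfloor\cdot\rfloor$ absorb.
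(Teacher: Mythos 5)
A preliminary remark: the paper never actually proves Fact~\ref{fact:gap} --- it is asserted bare in the appendix and then consumed by the proofs of Lemma~\ref{lem:p-delay} and Theorem~\ref{the:rgt} --- so your proposal has to be judged on its own merits rather than against a written proof. Your core counting is sound and is almost certainly the intended argument: under the forced-exploration rule \eqref{eq:f-expl}, the forced plays of arm $m$ by a non-restarting agent $j$ occupy a single residue class modulo $L=\lfloor M/p\rfloor$; any $t-s$ consecutive integers meet each residue class at least $\lfloor (t-s)/L\rfloor$ times; $L\le M/p$ converts this into $\lfloor p(t-s)/M\rfloor$; and since $N_t^{k,m}$ aggregates the plays of agent $k$ together with those of its neighbors (self-observations are counted, per the convention $\eta_k=\langle \boldsymbol{1},W_k\rangle-1$, and $\eta_k\ge 1$ is implicit throughout the paper because $\alpha^k$ carries $\eta_k$ in a denominator), at least two agents contribute, which is exactly where the factor $2$ comes from. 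Two small corrections: you need $L\ge M+1$, not $L\ge M$ (when $L=M$ the residue $m=M$ is unattainable and arm $M$ is never forced), and connectivity of $\mathcal{G}$ is neither assumed by the paper nor needed --- $\eta_k\ge 1$ is all you use.

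The genuine gap is your last paragraph. The Fact is a deterministic inequality, and it is only true when the window $(s,t]$ is restart-free for every agent whose plays you count --- which is the reading "between two restarts" must be given. Your attempt to instead tolerate neighbor restarts by "restricting to the good event of Lemma~\ref{lem:p-delay}" fails on three counts. First, the event $\mathcal{C}_n^k$ in \eqref{eq:gec} constrains only agent $k$'s own detection times $\tau_i^k$; it says nothing about the neighbors' restart times, so it does not deliver the neighborhood synchronization you invoke (you would need the intersection of the good events of all agents in the neighborhood, which also turns the Fact into a probabilistic statement and forces a redo of the probability accounting downstream). Second, Lemma~\ref{lem:p-delay} is itself proved by appealing to Fact~\ref{fact:gap}, so importing its guarantees here is circular unless both are restructured as a joint induction over change points. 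Third, the claim that an $O(1)$ loss "is dominated by the floor" is simply false, because the inequality is exact: take $M=5$, $p=\nicefrac{1}{2}$, so $L=10$, and a window of length $t-s=20$, so the Fact demands at least $2\lfloor 20\cdot\nicefrac{1}{2}/5\rfloor=4$ observations; a restart-free agent $k$ contributes exactly $2$ forced plays, while a neighbor whose pre-restart phase puts its arm-$m$ plays at offset $7$ from $s$ and which restarts at $s+6$ contributes only one forced play (at $s+11$), for a total of $3<4$. So a single neighbor restart inside the window already breaks the bound. The repair is not to patch with good events but to state and prove the Fact for windows that are restart-free for the counted agents; your first three paragraphs then constitute a complete proof.
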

%
Define the good event $\mathcal{F}_{n}^k = \{\tau_n > \nu_n \}$ and good event $\mathcal{T}_n^k = \{ \tau_n^k< \nu_n+d_n^k \}$. Recall the definition of the good event $\mathcal{C}_n^k$ in \eqref{eq:gec}. We note that $\mathcal{C}_n^k = \mathcal{F}_1^k \cap \mathcal{T}_1^k \cap \ldots \cap \mathcal{F}_n^k \cap \mathcal{T}_n^k$ is the intersection of the event sequence of $\mathcal{F}_n^k$ and $\mathcal{T}_n^k$ up to the $n$-th change point. We decompose the expected cumulative regret with respect to the event $\mathcal{F}_1^k$. That yields
\begin{align}
    \mathcal{R}_T^k = \mathbb{E}[R_T^k] &= \mathbb{E}[R_T^k\mathbb{I}(\mathcal{F}_1^k)] + \mathbb{E}[R_T^k\mathbb{I}(\widebar{\mathcal{F}}_1^k)] \notag \\
    & \leq \mathbb{E}[R_T^k\mathbb{I}(\mathcal{F}_1^k)] + T\Delta_{opt}^{max} P(\widebar{\mathcal{F}}_1^k) \notag \\
    & \leq \mathbb{E}[R_{\nu_1}^k\mathbb{I}(\mathcal{F}_1^k)] + \mathbb{E}[R_{T-\nu_1}^k] + T \Delta_{opt}^{max} M \sigma \notag \\
    & \leq \tilde{C}_1^k + \Delta_1^*p\nu_1 + T \Delta_{opt}^{max} M\sigma + \mathbb{E}[R_{T-\nu_1}^k], \notag 
\end{align}
where $\tilde{C}_1^k = \Delta_1^*[M\lceil \frac{8 \ln T}{(\Delta_1^{\min})^2} \rceil + M(1+\frac{\pi^2}{3})]$, which is concluded from Lemma~\ref{lem:rgt-sta}. By the law of total expectation,
\begin{align}
    \mathbb{E}[R_{T-\nu_1}^k] &\leq \mathbb{E}[R_{T-\nu_1}^k|\mathcal{F}_1^k \cap \mathcal{T}_1^k] + T\Delta_{opt}^{max}(1 - P(\mathcal{F}_1^k \cap \mathcal{T}_1^k)) \notag \\
    &= \mathbb{E}[R_{T-\nu_1}^k|\mathcal{F}_1^k \cap \mathcal{T}_1^k] + T\Delta_{opt}^{max} P(\widebar{\mathcal{F}}_1^k \cap \widebar{\mathcal{T}}_1^k) \notag \\
    & \leq \mathbb{E}[R_{T-\nu_1}^k|\mathcal{F}_1^k \cap \mathcal{T}_1^k]  + T\Delta_{opt}^{max}(M \sigma + \delta). 
    \label{eq:fal-ala}
\end{align}
Inequality \eqref{eq:fal-ala} results from Lemma~\ref{lem:p-delay}. Furthermore,
\begin{align}
    \mathbb{E}[R_{T-\nu_1}^k|\mathcal{F}_1^k \cap \mathcal{T}_1^k] &= \mathbb{E}[R_{T-\nu_1}^k|\mathcal{C}_1^k]. 
\end{align}
By combining the previous steps, we have 
\begin{gather}
    \mathcal{R}_T^k \leq \mathbb{E}[R_{T-\nu_1}^k|\mathcal{C}_1^k] + \tilde{C}_1^k + \Delta_1^*p\nu_1 + T\Delta_{opt}^{max}(2M \sigma + \delta). \notag 
\end{gather}
Similarly, 
\begin{align}
    \mathbb{E}[R_{T-\nu_1}^k|\mathcal{C}_1^k] &\leq \mathbb{E}[R_{T-\nu_1}^k\mathbb{I}(\mathcal{F}_2^k)|\mathcal{C}_1^k] + T\Delta_{opt}^{max}P(\widebar{\mathcal{F}}_2^k|\mathcal{C}_1^k) \notag \\
    \leq \mathbb{E}& [R_{\nu_2-\nu_1}^k\mathbb{I}(\mathcal{F}_2^k)|\mathcal{C}_1^k] +\mathbb{E}[R_{T-\nu_2}^k|\mathcal{C}_1^k] + T\Delta_{opt}^{max}M\sigma \notag \\
    \leq \tilde{C}_2^k &+ \Delta_2^*p(\nu_2-\nu_1) + \mathbb{E}[R_{T-\nu_2}^k|\mathcal{C}_1^k] + T\Delta_{opt}^{max}M\sigma. \notag 
\end{align}
Furthermore, we have
\begin{align}
    &\mathbb{E}[R_{T-\nu_2}^k|\mathcal{C}_1^k] \notag \\
    &\leq \mathbb{E}[R_{T-\nu_2}^k|\mathcal{F}_2^k \cap \mathcal{T}_2^k \cap \mathcal{C}_1^k] + T\Delta_{opt}^{max}(1-P(\mathcal{F}_2^k \cap \mathcal{T}_2^k|\mathcal{C}_1^k)) \notag \\
    & \leq \mathbb{E}[R_{T-\nu_2}^k|\mathcal{C}_2^k] + T\Delta_{opt}^{max}(M\sigma +\delta).
\end{align}
Wrapping up previous steps, we arrive at
\begin{align}
    \mathcal{R}_T^k &\leq \mathbb{E}[R_{T-\nu_2}^k|\mathcal{C}_2^k]+ \tilde{C}_1^k+\tilde{C}_2^k + \Delta_1^*p\nu_1 + \Delta_2^*p(\nu_2-\nu_1) \notag \\
    &\quad + T\Delta_{opt}^{max}(4M\sigma +2\delta).
\end{align}
Recursively, we can bound $\mathbb{E}[R_{T-\nu_2}^k|\mathcal{C}_{2}^k]$ by applying the same method as before. The regret upper bound then yields
\begin{gather}
    \mathcal{R}_T^k \leq \sum_{n=1}^{N}\tilde{C}_n^k + \Delta_{\text{opt}}^{\max}T(p+2MN\sigma + M\delta),
\end{gather}
where $\tilde{C}_n^k = \Delta_n^*[M\lceil \frac{8 \xi \log T}{(\Delta_n^{\min})^2} \rceil + M(1+\frac{\pi^2}{3})]$. 
\subsection{Proof of Corollary~\ref{cor:rgt}}\label{app:cor}
Based on Theorem~\ref{the:rgt}, the regret is upper bound by $\mathcal{R}_T^k \leq \sum_{n=1}^{N}\tilde{C}_n^k + \Delta_{opt}^{max}T(p+2MN\sigma + M\delta)$, let $\delta = \frac{1}{T}$ and $p = \sqrt{\frac{M\log T}{T}}$,
\begin{align}
    R_T &\leq \sum_{k=1}^K[\sum_{n=1}^{N}\tilde{C}_n^k + \Delta_{\text{opt}}^{\max}T (p+2MN\sigma + M\delta)] \notag \\
    &\leq KN \Delta_n^*[M\lceil \frac{8 \xi \log T}{(\Delta_n^{\min})^2} \rceil + M(1+\frac{\pi^2}{3})] \notag \\
    &\quad + K\Delta_{\text{opt}}^{\max} [\sqrt{MT\log T} + (2MN+M)].
\end{align}
\bibliographystyle{IEEEtran}
\bibliography{references}  
\end{document}